\documentclass{article}

\usepackage{arxiv}

\usepackage[utf8]{inputenc} 
\usepackage[T1]{fontenc}    
\usepackage{hyperref}       
\usepackage{url}            
\usepackage{booktabs}       
\usepackage{amsfonts}       
\usepackage{nicefrac}       
\usepackage{microtype}      
\usepackage{latexsym}
\usepackage{graphicx}
\usepackage{mathptmx}

%
\usepackage{hyperref}       
\usepackage{url}            
\usepackage{booktabs}       
\usepackage{amsfonts}       
\usepackage{nicefrac}       
\usepackage{microtype}      
\usepackage{lipsum}		
\usepackage{footnote}
\usepackage{setspace}
\usepackage{graphicx}
\usepackage{doi}
\usepackage{endnotes}

\def\E{{\mathbb E}}

\providecommand{\keywords}[1]
{
  \small	
  \textbf{\textit{Keywords---}} #1
}
\usepackage{mathtools}

\usepackage{colortbl}
\usepackage{comment}
\usepackage{graphicx}
\usepackage{bbm}
\usepackage{subcaption}
\usepackage{siunitx}
\usepackage{appendix}
\usepackage{enumitem}
\usepackage{amsmath}
\usepackage{amsfonts}
\usepackage{amssymb}
\usepackage{amsthm}
\usepackage{multirow}
\usepackage{chemformula}
\usepackage{cleveref}
\usepackage{wrapfig}
\usepackage[ruled]{algorithm2e}

\usepackage{graphicx}
\usepackage[round]{natbib}
\usepackage{doi}

\newtheorem{theorem}{Theorem}

\newtheorem{lemma}{Lemma}

\newtheorem{proposition}{Proposition}

\title{Variance Reduction based Partial Trajectory Reuse to Accelerate Policy Gradient Optimization}


\author{Hua Zheng, Wei Xie\thanks{Corresponding author: w.xie@northeastern.edu}\\[12pt]
Department of Mechanical and Industrial Engineering\\
Northeastern University\\
360 Huntington Ave\\
Boston, MA 02115, USA
}


\date{}



\hypersetup{
pdftitle={A template for the arxiv style},
pdfsubject={q-bio.NC, q-bio.QM},
pdfauthor={David S.~Hippocampus, Elias D.~Striatum},
pdfkeywords={First keyword, Second keyword, More},
}

\begin{document}
\maketitle

\begin{abstract}
Built on our previous study on green simulation assisted policy gradient (GS-PG) focusing on trajectory-based reuse, in this paper, we consider infinite-horizon Markov Decision Processes and create a new importance sampling based policy gradient optimization approach to support dynamic decision making. The existing GS-PG method was designed to learn from complete episodes or process trajectories, which limits its applicability to low-data situations and flexible online process control. To overcome this limitation, the proposed approach can selectively reuse the most related partial trajectories, i.e., the reuse unit is based on per-step or per-decision historical observations. In specific, we create a mixture likelihood ratio (MLR) based policy gradient optimization that can leverage the information from historical state-action transitions generated under different behavioral policies. The proposed variance reduction experience replay (VRER) approach can intelligently select and reuse most relevant transition observations, improve the policy gradient estimation, and accelerate the learning of optimal policy. Our empirical study demonstrates that it can improve optimization convergence and enhance the performance of state-of-the-art policy optimization approaches such as actor-critic method and proximal policy optimizations. 
\end{abstract}

\vspace{0.1in}
\keywords{Reinforcement Learning \and Markov Decision Process  \and Policy Optimization \and Variance Reduction \and Mixture Likelihood Ratio \and Experience Replay}

\vspace{0.3in}
\section{Introduction}
In recent years, various policy optimization approaches are developed to solve reinforcement learning (RL) and process control problems.
They often consider parametric policies and search for optimal solution through stochastic policy gradient approach. Historical observations can be reused to improve optimization convergence, especially in low-data situations.
According to the base unit of observations to reuse, 
policy gradient 
algorithms can be classified into episode-based and step-based approaches \citep{metelli2020importance}. 
Episode-based approaches perform importance sampling (IS) on full trajectories accounting for the distributional difference induced by target and 
behavior policies. The
importance weight is built on the cumulative product of likelihood ratios (LR) of state-action transitions occurring within each trajectory. 
This can lead to extremely high variance, especially for those problems with long planning horizon \citep{andradottir1995choice,schlegel2019importance}. 
\textit{Thus, the trajectory-based reuse strategy is not be applicable to many applications, such as personalized cell therapy manufacturing, with: (1) small amount of data; (2) complex state-action transition model and long planing horizon; and (3) requiring real-time flexible process control.}

Instead of reusing entire historical trajectories, we need to have an intelligent and flexible strategy that can select and reuse the most related parts of trajectory which can change for different random scenarios. For example, during cell therapy manufacturing, the cell metabolic state can evolve with time during its life cycle and also therapeutic cells can have metabolic shift under heterogeneous culture conditions. To improve prediction and guide real-time process control, we can reuse historical step-based observations that have cell metabolic state and bioprocessing dynamics similar to the target distribution.
\textit{Therefore, step-based policy gradient algorithms can take state-action transitions as the base unit of observations to reuse  \citep{metelli2020importance} and overcome the limitations of the trajectory-based reuse strategy.}

In this paper, we focus on step-based policy gradient for infinite-horizon Markov Decision Processes (MDPs).
That means we update policy parameters per step (or mini-batch of steps), which requires a single state-action transition LR for each historical observation 
to account for the difference in
state-occupancy measures or 
stationary state distribution induced by different target and behavior policies. 
Various step-based policy gradient algorithms have been proposed during recent years.
The studies in distribution correction (DICE)  \citep{NEURIPS2019_cf9a242b,yang2020off} 
provide ways to estimate these state occupancy ratios in RL. 
Proximal policy optimization (PPO), as one of the most popular step-based policy gradient approaches \citep{schulman2017proximal}, uses a clipped surrogate objective to control incentives for the new candidate policy to get far from the old policy and thus avoid parameter updates too much at one step. Actor-Critic algorithm, as a classic and theoretically solid policy optimization framework, jointly optimizes the value function (critic) and the policy (actor); see for example \cite{bhatnagar2009natural}. 


In our previous study \citep{zheng2021green}, we created a new experience replay approach called variance reduction experience replay (VRER) for 
episode-based policy optimization. This approach can automatically select the most relevant historical trajectory episodes based on a comparison of gradient variance between historical episodes and current episodes, i.e., episodes collected by following the candidate policy. Then the selected historical trajectories are used to improve policy gradient estimation through multiple importance sampling techniques.
Our theoretical and empirical studies have showed that the VRER based policy gradient estimator can improve sample efficiency and lead to a superior performance in convergence.

Build on \cite{zheng2021green}, in this paper, we create a partial trajectory based VRER approach for infinite horizon MDPs. It can select and reuse the most relevant historical observations on state-action transitions to improve policy gradient estimation. 
The proposed VRER approach is general and it can be integrated into various stochastic policy gradient approaches to improve optimization convergence. In the paper, we provide an algorithm to utilize it to enhance two state-of-the-art policy optimization algorithms, including Actor-Critic algorithm and PPO.

Therefore, the key contributions of this study include: (1) create the VRER based policy optimization that can selectively reuse the most relevant historical transitions or partial trajectory observations; (2) develop multiple importance sampling (MIS) based off-policy actor-critic method;   
and (3) analytically and empirically show that the proposed VRER based policy gradient approach 
can improve the policy gradient estimation, speed up the optimal convergence for RL problems, and support real-time process control. 

The paper is organized as follows. We start in Section \ref{sec: problem description} by introducing the notation and
basics about RL policy optimization, importance sampling (IS), and multiple importance sampling. Then we propose the mixture likelihood ratio (MLR) based policy gradient estimation in Section~\ref{sec:MLR}. We create a selection rule that allows us to automatically select the most relevant historical transitions to improve the policy gradient estimation accuracy in Section~\ref{sec:algorithms}. We further show how this selection strategy can be customized into the policy gradient optimization and introduce the variance reduction experience replay with the resulting algorithms. We conclude this paper with
the comprehensive empirical study on the proposed framework in Section~\ref{sec: empirical study}. The implementation of our algorithm can be found at \url{https://github.com/zhenghuazx/vrer_policy_optimization}.


\vspace{-0.1in}
\section{Problem Description}
\label{sec: problem description}

\vspace{-0.1in}

We study reinforcement learning and process control problems in
which an agent acts on a complex stochastic system 
by sequentially choosing actions over a sequence of time steps in
order to maximise a cumulative reward.
We formulate the problem of interest as an infinite-horizon Markov decision process (MDP) specified by $(\mathcal{S},\mathcal{A}, r, \mathbb{P}, \pmb{s}_1)$, where $\mathcal{S}$ is the state space, $\mathcal{A}$ is the action space, and a reward function is denoted by $r: \mathcal{S}\times \mathcal{A}\rightarrow\mathbb{R}$. An initial state distribution is specified by the density $p_1(\pmb{s}_1)$. The stationary state transition model $p(\pmb{s}_{t+1}|\pmb{s}_t,\pmb{a}_t)$ satisfies the Markov property $p(\pmb{s}_{t+1}|\pmb{s}_t,\pmb{a}_t,\ldots,\pmb{s}_1,\pmb{a}_1)=p(\pmb{s}_{t+1}|\pmb{s}_t,\pmb{a}_t)$. 
The system starts at an initial state $\pmb{s}_1$ at time $t=1$ drawn from $p_1(\pmb{s}_1)$.
At time $t$, the agent observes the state $\pmb{s}_t \in \mathcal{S}$, takes an action $\pmb{a}_t \in \mathcal{A}$ by following a parametric policy distribution, denoted by $\pi(\pmb{s}_t|\pmb{a}_t;\pmb\theta)$ specified with parameters $\pmb\theta$, 
and receives 
a reward $r_{t}(\pmb{s}_t,\pmb{a}_t) \in \mathbb{R}$.
The future return is the total discounted reward, denoted by 
$r_t^\gamma = \sum_{t^\prime=t}^\infty \gamma^{t^\prime-t}r(\pmb{s}_{t^\prime},\pmb{a}_{t^\prime})$, where $\gamma\in(0,1)$ denotes the discount factor.

Suppose that the policy $\pi_{\pmb\theta}$ is continuous and differentiable with respect to its parameters $\pmb\theta$. For each candidate policy specified by $\pmb\theta$, the state value function $V^\pi(\pmb{s})$
and the Q-function $Q^\pi(\pmb{s}, \pmb{a})$ are defined to be the expected total discounted reward-to-go, i.e.,
\begin{eqnarray}
  V^\pi(\pmb{s})&= &\E[r_1^\gamma| \pmb{s}_1=\pmb{s};\pi]
    =\E\left[\left.\sum_{t=1}^\infty \gamma^{t-1}r(\pmb{s}_{t},\pmb{a}_{t})\right| \pmb{s}_1=\pmb{s};\pi\right],
    \nonumber  \\ 
    Q^\pi(\pmb{s},\pmb{a}) &= &\E[r_1^\gamma| \pmb{s}_1=\pmb{s},\pmb{a}_1=\pmb{a};\pi]=\E\left[\left.\sum_{t=1}^\infty \gamma^{t-1}r(\pmb{s}_{t},\pmb{a}_{t})\right| \pmb{s}_1=\pmb{s}, \pmb{a}_1=\pmb{a};\pi\right].
    \nonumber 
\end{eqnarray}
 The agent’s 
 goal is to find an optimal
policy that maximises the cumulative discounted reward, denoted by 
$J(\pi)=\E[\sum_{t=1}^\infty \gamma^{t-1}r(\pmb{s}_{t},\pmb{a}_{t})|\pi].$ 
For any feasible policy $\pmb\theta$, 
we assume that the Markov chains, i.e., $\{\pmb{s}_t\}_{t\geq \infty}$ and $\{\pmb{s}_t,\pmb{a}_t\}_{t\geq \infty}$, are irreducible and aperiodic.
We denote the improper discounted state distribution as
$$d^\pi(\pmb{s})=
\int _\mathcal{S}
\sum_{t=1}^\infty\gamma^{t-1}p(\pmb{s}_1)p(\pmb{s}_{t}=\pmb{s}|\pmb{s}_1;\pi)d \pmb{s}_1.
$$
Then we can write the policy optimization problem with
the performance objective as an expectation,
\begin{equation} \label{eq: objective}
  \max_{\pmb{\theta}} ~  J(\pmb\theta)
  =  \E \left[ \left. 
  \sum_{t=1}^\infty \gamma^{t-1}r(\pmb{s}_{t},\pmb{a}_{t})
  \right|\pi \right]
    =\int d^\pi(\pmb{s}) \int \pi_{\pmb\theta}(\pmb{a}|\pmb{s})r(\pmb{s},\pmb{a})d\pmb{s}d\pmb{a} = \E_{\pmb{s}\sim d^\pi(\pmb{s}),\pmb{a}\sim \pi_{\pmb\theta}(\pmb{a}|\pmb{s})}[r(\pmb{s},\pmb{a})],
\end{equation}
where $\E_{\pmb{s}\sim d^\pi(\pmb{s}),
\pmb{a}\sim \pi_{\pmb\theta}(\pmb{a}|\pmb{s})}[\cdot]$ denotes the expected value with
respect to the discounted state distribution $d^\pi(\pmb{s})$ and the policy distribution $\pi_{\pmb\theta}(\pmb{a}|\pmb{s})$. We denote 
the stationary probability function of state-action pair by 
\begin{equation}
\rho_{\pmb\theta}(\pmb{s},\pmb{a})=\pi_{\pmb\theta}(\pmb{a}|\pmb{s})d^\pi(\pmb{s}).
\nonumber 
\end{equation}
To simplify notation, we superscript the value function $V^\pi(\pmb{s})$ and the advantage function, denoted by $A^\pi(\pmb{s},\pmb{a})$ that will be defined in Section~\ref{subsec:stochasticPG}, by $\pi$ rather than $\pi_{\pmb\theta}$. 


\subsection{Stochastic Policy Gradient Estimation
}
\label{subsec:stochasticPG}
Policy gradient optimization is perhaps the most popular class of 
RL algorithms designed to solve the optimization problem~\eqref{eq: objective}. At each $k$-th iteration, we can iteratively update the policy parameters,
\begin{equation} 
\label{eq: policy gradient update}
    \pmb\theta_{k+1} \leftarrow \pmb\theta_k + \eta_k \widehat{\nabla J}(\pmb\theta_k),
    \nonumber 
\end{equation}
where $\eta_k$ is learning rate or step size and $\widehat{\nabla J}(\pmb{\theta}_k)$ is an estimator of policy gradient $\nabla J(\pmb{\theta}_k)$.
For notational convenience, $\nabla$ denotes the gradient with respect to policy parameters $\pmb{\theta}$ unless specified otherwise.
Under regularity conditions, \textit{Policy Gradient Theorem} \citep{sutton1999policy} reformulates the policy gradient as
\begin{align}
    \nabla J(\pmb{\theta}) = \int d^\pi(\pmb{s}) \int \nabla\pi_{\pmb\theta}(\pmb{a}|\pmb{s})Q^\pi(\pmb{s},\pmb{a})d\pmb{s}d\pmb{a} 
    =\E_{\pmb{s}\sim d^\pi(\pmb{s}),\pmb{a}\sim \pi_{\pmb\theta}(\pmb{a}|\pmb{s})}[\nabla \log \pi_{\pmb\theta}(\pmb{a}|\pmb{s})Q^\pi(\pmb{s},\pmb{a})].
    \label{eq: policy gradient}
\end{align}
This theorem has an important practical value because it reduces the computation of the performance gradient to a
simple expectation \citep{silver2014deterministic}. 
By applying sample average approximation (SAA) on the expectation in (\ref{eq: policy gradient}), we have the \textit{naive policy gradient (PG) estimator},
\begin{equation}
    \widehat{\nabla J}^{PG}_k \equiv
    \widehat{\nabla J}^{PG}(\pmb{\theta}_k) 
=\frac{1}{n} \sum_{j=1}^n g_k\left(\pmb{s}^{(k,j)},\pmb{a}^{(k,j)}\right) 
    \label{eq.PG-estimator}
    \nonumber 
\end{equation}
where $n$ is the number of replications. The \textit{scenario-based policy gradient estimate} at $\pmb{\theta}_k$ is represented as,
\begin{equation}
 g_k\left(\pmb{s},\pmb{a}\right) \equiv g\left(\pmb{s},\pmb{a}|\pmb\theta_k\right)= Q^{\pi}(\pmb{s},\pmb{a})\nabla \log \pi_{\pmb{\theta}_k}
 \left({\pmb{a}\left|\pmb{s}\right.}\right).
 \label{eq.scenariobasedGradient}
 \nonumber 
 \end{equation}
 
A widely used variation of policy gradient \eqref{eq: policy gradient} is to subtract a baseline value from the return $Q^\pi(\pmb{s},\pmb{a})$
 to reduce the variance of gradient estimation while keeping the unbiased property. A common baseline is to subtract a value function $V^\pi(\pmb{s})$; see \cite[Lemma 2]{bhatnagar2009natural}. Then we have a new unbiased policy gradient estimator with lower variance,
 \begin{align}
    \nabla J(\pmb{\theta})
    &= \E_{\pmb{s}\sim d^\pi(\pmb{s}),\pmb{a}\sim \pi_{\pmb\theta}(\pmb{a}|\pmb{s})}\left[\nabla \log \pi_{\pmb\theta}(\pmb{a}|\pmb{s})\left(Q^\pi(\pmb{s},\pmb{a})-V^\pi(\pmb{s})\right)\right].
    \label{eq: policy gradient with advantage}
\end{align}
 The difference $A^\pi(\pmb{s},\pmb{a}) \equiv Q^\pi(\pmb{s},\pmb{a})-V^\pi(\pmb{s})$ is called \textit{advantage}. It intuitively measures the extra reward that an agent can obtain by taking that a particular action $\pmb{a}$. This leads to the ``vanilla" policy gradient estimator,
 \begin{equation}
 g_k\left(\pmb{s},\pmb{a}\right) \equiv g\left(\pmb{s},\pmb{a}|\pmb\theta_k\right)= A^{\pi_{\pmb\theta_k}}(\pmb{s},\pmb{a})\nabla \log \pi_{\pmb{\theta}_k}
 \left({\pmb{a}\left|\pmb{s}\right.}\right).
 \label{eq: vannila scenario based gradient}
 \nonumber 
 \end{equation}

According to the Bellman equation, i.e., $Q^{\pi}(\pmb{s},\pmb{a})=r(\pmb{s},\pmb{a})+\gamma\E_{\pmb{s}^\prime\sim p(\pmb{s}^\prime|\pmb{s},\pmb{a})}[V^{\pi}(\pmb{s}^\prime)]$, the advantage function can be expressed as
\begin{equation}\label{eq: advantage}
    A^{\pi}(\pmb{s},\pmb{a})=r(\pmb{s},\pmb{a})+\gamma\E_{\pmb{s}^\prime\sim p(\pmb{s}^\prime|\pmb{s},\pmb{a})}[V^{\pi}(\pmb{s}^\prime)]-V^{\pi}(\pmb{s}).
\end{equation}
Let $\hat{V}(\pmb{s})$ denote an unbiased estimator of value function at state $\pmb{s}$. Then, for any given \textit{state-action transition observation}, denoted by $(\pmb{s},\pmb{a},\pmb{s}^\prime)$, the temporal difference (TD) error, i.e.,
\begin{equation} \label{eq: advantage estimate}
    \delta(\pmb{s},\pmb{a},\pmb{s}^\prime)=r(\pmb{s},\pmb{a})+\gamma \hat{V}(\pmb{s}^\prime)-\hat{V}(\pmb{s})
    \nonumber 
\end{equation}
is an unbiased estimator of the
advantage \eqref{eq: advantage}; see \cite[Lemma 3]{bhatnagar2009natural}.

In a nutshell, estimating the advantage function requires a set of observations $\{(\pmb{s}_t,\pmb{a}_t,\pmb{s}_{t+1}, r_t)\}$ while the policy gradient estimate 
involves the estimated advantage function $A^\pi(\pmb{s},\pmb{a})$ and the estimated score function $\nabla \log \pi_{\pmb{\theta}_k}\left(\pmb{a}|\pmb{s}\right)$ at state-action pairs $\{(\pmb{s}_t, \pmb{a}_t)\}$. We will discuss the policy gradient estimation and advantage estimation separately. From Section~\ref{subsec: MIS} to \ref{subsec:off-policy policy gradient estimator}, we will focus on the multiple importance sampling based policy optimization and policy gradient estimation, while supposing that an unbiased estimator of advantage function is given. Then in Section~\ref{subsec:model free GreenSimulation_PolicyGradient}, we will show how to estimate the advantage function through temporal difference learning.
\textit{In this paper, we will focus on creating a selective historical transition replay approach in order to improve the policy gradient estimation.}

\subsection{Importance Sampling and Multiple Likelihood Ratio}\label{subsec: MIS}

In this section, we describe how to utilize  important sampling (IS) and multiple likelihood ratio (MLR) to improve the policy gradient estimation. Denote the state-action input pair as $\pmb{x} \equiv (\pmb{s},\pmb{a})$. Let $\rho_i(\pmb{x})=\rho_{\pmb\theta_i}(\pmb{s},\pmb{a})$ represent the stationary sampling generative distribution at the $i$-th episode obtained under a policy specified by $\pmb{\theta}_i$. For any candidate policy specified by $\pmb{\theta}$, let  $\rho(\pmb{x})=\rho_{\pmb\theta}(\pmb{s},\pmb{a})$ denote the target distribution or likelihood. We are interested in estimating the expected gradient $\nabla J(\pmb\theta)=\E_{\rho(\pmb{x})}\left[g\left(\pmb{x}\right)\right]
=\E_{\rho_{\pmb\theta}(\pmb{s},\pmb{a})}\left[g\left(\pmb{s},\pmb{a}|\pmb\theta\right)\right]$.

When the historical samples generated from the \textit{sampling distribution} $\rho_i$ are selected and reused to estimate the candidate policy gradient $\nabla J(\pmb\theta_k)$
under the \textit{target distribution} $\rho_k$, the importance sampling estimator \citep{andradottir1995choice,rubinstein2016simulation} 
corrects the sampling distribution
with the importance weight or likelihood ratio defined as $f(\pmb{x})={\rho_k(\pmb{x})}/{\rho_i(\pmb{x})}$, i.e.,
\begin{equation} 
\label{eq: importance sampling estimator}
    \widehat{\nabla J}(\pmb\theta_k)
    =\frac{1}{n}\sum^n_{j=1}
    f\left(\pmb{x}^{(i,j)}\right)
    g_k\left(\pmb{x}^{(i,j)}\right),
    \nonumber 
\end{equation}
\noindent where  $\pmb{x}^{(i,j)}\stackrel{i.i.d.}\sim \rho_i (\pmb{x})$ with $j=1,2,\ldots,n$.
For simplification, we allocate a constant number of replications (i,e., $n$) for each visit at $\pmb{\theta}$.
We assume $\rho_i(\pmb{x})>0$
whenever $\rho_k(\pmb{x})g_k(\pmb{x})\neq 0$. 
This estimator is unbiased,
\begin{equation} 
\label{eq: individual likelihood unbiaed estimator}
    \E_{\rho_i}\left[\widehat{\nabla J}(\pmb\theta_k) \right]=\int\frac{\rho_k(\pmb{x})}{\rho_i(\pmb{x})}\rho_i(\pmb{x})g_k(\pmb{x}) \mbox{d}\pmb{x}=\int \rho_k\left(\pmb{x}\right)g_k\left(\pmb{x}\right) \mbox{d}\pmb{x}=\nabla J(\pmb\theta_k).
    \nonumber 
\end{equation}

However, the likelihood ratio ${\rho_k(\pmb{x})}/{\rho_i(\pmb{x})}$ can be extremely large or small at sample $\pmb{x}$.
Without any bound on the likelihood ratio ${\rho_k(\pmb{x})}/{\rho_i(\pmb{x})}$, the importance sampling estimator $\widehat{\nabla J}$
can have inflated variance, which is typically induced by large difference between target and proposal distributions.
 Inspired by the BLR-M metamodel \citep{FengGreenSim2017} and multiple importance sampling \citep{veach1995optimally}, we utilize the mixture likelihood ratio (MLR) method to address this issue. It has a mixture sampling  distribution, denoted by $\ell_M(\pmb{x}) \equiv \sum_{i\in{U}}\alpha_i \rho_i(\pmb{x})$ with $\sum_{i\in{U}}\alpha_i=1$, 
 composed of multiple distribution components, denoted by $\{\rho_i(\pmb{x}):i\in{U}\}$, where $U$ represents the reuse set. Thus, the MLR can avoid the limitations induced by using a single proposal distribution $\rho_i(\pmb{x})$ and reduce the variance inflation issue.

Given the samples generated from those sampling distributions, denoted by $\{\pmb{x}^{(i,j)}: i\in {U} ~\mbox{and}~ j = 1,2,\ldots,n\}$, the MLR estimator, as stratified
sampling from the mixture distribution $\ell_M(\pmb{x})$, becomes
\begin{equation} 
\label{eq: MLR estimator} 
    \widehat{\nabla J}_k^{MLR}= \frac{1}{|{U}|}\sum_{i\in{U}}\frac{1}{n}\sum^{n}_{j=1}f_M\left(\pmb{x}^{(i,j)}\right)
    g_k\left(\pmb{x}^{(i,j)}\right)
    ~~~\mbox{with}~~~
     f_M(\pmb{x}) = \frac{\rho_k(\pmb{x})}{\ell_M(\pmb{x})}
    = \frac{\rho_k(\pmb{x})}{\sum_{i\in{U}}\alpha_i \rho_i(\pmb{x})}.
\end{equation}
\textit{To have a unbiased MLR estimator, the weight is selected to be the proportion of historical sample size generated from each proposal distribution component $\rho_i(\pmb{x})$, i.e., $\alpha_i = \frac{n}{\sum_{i\in{U}} n}$.}
Suppose there are $n$ historical samples generated from each distribution. Then, we allocate equal weight on $\rho_i(\pmb{x})$, i.e., $\alpha_i={1}/{|{U}|}$ for $i\in{U}$, where $|\cdot|$ denotes set cardinality. This MLR estimator is unbiased \citep{veach1995optimally},
\begin{eqnarray}\small
\E\left[\widehat{\nabla J}_k^{MLR}\right]
=\E\left[\frac{1}{|{U}|}\sum_{i\in {U}}\frac{1}{n}
\sum^{n}_{j=1}\frac{\rho_k\left(\pmb{x}^{(i,j)}\right)}{\sum_{i\in{U}}\alpha_i \rho_i\left(\pmb{x}^{(i,j)}\right)}
g_k\left(\pmb{x}^{(i,j)}\right)
\right]
=\frac{1}{|{U}|}\sum_{i\in{U}}\int\frac{\rho_k(\pmb{x})}{\frac{1}{|{U}|}\sum_{i\in{U}} \rho_i(\pmb{x})}
g_k\left(\pmb{x}\right)\rho_i(\pmb{x})
\mbox{d}\pmb{x} 
=\nabla J(\pmb\theta_k)\nonumber.
\end{eqnarray}

The major advantage of the MLR estimator,
compared with the standard IS, is higher sample-efficiency and lower gradient estimation variance. Since we always include the transitions generated in the current iteration, the mixture likelihood ratio $f_M(\pmb{x})$ in (\ref{eq: MLR estimator}) reaches its max value when the likelihood $\rho_i(\pmb{x})=0$ for all remaining sampling distributions with $i\in U$. 
Thus, this mixture likelihood ratio is bounded, i.e., $f_M(\pmb{x}) \leq |{U}|$, which can control the policy gradient estimation variance inflation issue. 
\textit{In this way, the mixture likelihood ratio puts higher weight on the samples that are more likely to be generated by the target distribution $\rho(\pmb{x})$ without assigning extremely large weights on the others.}

\section{Mixture Likelihood Ratio Assisted Policy Optimization}
\label{sec:MLR}
Given a set of historical samples collected under different stationary distributions and behavior policies, the off-policy strategy is used to find the optimal policy maximizing the expected return.
One can reuse the past samples to improve the policy gradient estimation through MLR. 
Let $M_k$ denote the set of all policies (i.e., actor) and value functions (i.e., critic) that have been visited until the beginning of the $k$-th iteration. Let ${U}_k$ be \textit{a reuse set} including the indices of model candidates
whose transitions are selected and reused for estimating the policy gradient $\nabla J(\pmb\theta_k)$. Denote its cardinality as $|{U}_k|$. 
For discussions in this section, suppose that ${U}_k$ is given.
We will present how to determine the reuse set ${U}_k$ to improve the policy gradient estimation accuracy in Section~\ref{sec:algorithms}.

\subsection{Off-policy Policy Gradient Estimation}
\label{subsec:off-policy policy gradient estimator}

Compared to on-policy alternatives, off-policy approaches do not require full trajectories and they can reuse the selected past transitions 
(``experience replay") to improve the sample efficiency. 
To use the past observations, we modify the policy gradient \eqref{eq: policy gradient with advantage} such that the mismatch between the sampling distribution $\rho_{\pmb\theta_i}(\pmb{s},\pmb{a})$ and the target distribution $\rho_{\pmb\theta_k}(\pmb{s},\pmb{a})$ is compensated by importance sampling estimator \eqref{eq: individual likelihood unbiaed estimator}, i.e.,
 
 \begin{equation} 
\label{eq: off-policy policy gradient}
    \nabla J(\pmb\theta_k)=\E_{\rho_{\pmb\theta_i}}\left[f(\pmb{s},\pmb{a})g(\pmb{s},\pmb{a})\right]=\E_{\rho_{\pmb\theta_i}}\left[\frac{\rho_{\pmb\theta_k}(\pmb{s},\pmb{a})}{\rho_{\pmb\theta_i}(\pmb{s},\pmb{a})}A^{\pi _{\pmb{\theta}_k}}(\pmb{s},\pmb{a})\nabla \log \pi_{\pmb{\theta}_k}
 \left({\pmb{a}\left|\pmb{s}\right.}\right)\right].
\end{equation}

Let $g_k(\pmb{s},\pmb{a})=\nabla \log \pi_{\pmb\theta_k}(\pmb{a}|\pmb{s})A^{\pi_{\pmb\theta_k}}(\pmb{s},\pmb{a})$. We can obtain an unbiased estimator of policy gradient in \eqref{eq: off-policy policy gradient} by using sample average approximation (SAA),  

\begin{equation}
\widehat{\nabla J}^{ILR}_{i,k}
 = \frac{1}{n}
 \sum^{n}_{j=1}\left[\frac{\rho_{\pmb\theta_k}\left(
 \pmb{s}_t^{(i,j)}, \pmb{a}_t^{(i,j)} \right)}{\rho_{\pmb{\theta}_i}\left(\pmb{s}_t^{(i,j)},\pmb{a}_t^{(i,j)} \right)}
      g_k\left(\pmb{a}_t^{(i,j)},\pmb{s}_t^{(i,j)}\right)
  \right] 
  \mbox{~ and ~} 
  \widehat{\nabla J}^{ILR}_{k}
 = \frac{1}{|{U}_k|}\sum_{i\in{U}_k}
\widehat{\nabla J}^{ILR}_{i,k},
  \label{eq: LR-based policy gradient estimator (model-free)}
\end{equation}
where the historical transitions are generated by $
\pmb{s}_t^{(i,j)}{\sim} d^{\pi_{\pmb\theta_i}}(\pmb{s})$ and
$\pmb{a}_t^{(i,j)} {\sim} \pi_{\pmb\theta_i}(\pmb{a}|\pmb{s}_t^{(i,j)})$ 
for $j=1,2,\ldots,n$.
In this paper, we use ILR to represent (individual) likelihood ratio. The MLR policy gradient estimator is
\begin{equation}
 \widehat{\nabla J}^{MLR}_{k}
 =\frac{1}{|{U}_k|}\sum_{i\in{U}_k}
 \frac{1}{n}
 \sum^{n}_{j=1}
 f_{k}\left(
\pmb{a}_t^{(i,j)},\pmb{s}_t^{(i,j)}
 \right)
      g_k\left(\pmb{a}_t^{(i,j)},\pmb{s}_t^{(i,j)}\right) ~~~\mbox{with} ~~~
      f_k\left(\pmb{a}_t,\pmb{s}_t
 \right) 
 = \frac{\rho_{\pmb\theta_k}\left(
 \pmb{s}_t, \pmb{a}_t \right)}{ \frac{1}{|{U}_k|}\sum_{i\in{U}_k}\rho_{\pmb{\theta}_i}\left(\pmb{s}_t, \pmb{a}_t\right)}.
 \label{BLR-M sequential gradient estimator}
\end{equation}

The key challenge of utilizing the ILR and MLR policy gradient estimators
is computing 
the stationary distributions $d^\pi(\pmb{s})$ in $\rho_{\pmb{\theta}_k}(\pmb{s},\pmb{a})=\pi_{\pmb{\theta}_k}(\pmb{a}|\pmb{s})d^\pi(\pmb{s})$. This problem is also known as distribution corrections (DICE) in RL. Fortunately, a list of approaches  has been recently proposed to address the challenge; see for example
\cite{NEURIPS2019_cf9a242b,yang2020off}.
The off-policy gradient estimator can be simplified 
by introducing bias. 
\cite{degris2012off} proposed an off-policy (actor-critic) gradient approximate,
\begin{equation} \label{eq: off-policy policy gradient (simplified)}
        \nabla J(\pmb{\theta}_k) \approx \E_{\rho_{\pmb\theta_i}}\left[\frac{\pi_{\pmb\theta_k}}{\pi_{\pmb\theta_i}}\nabla \log \pi_{\pmb\theta_k}(\pmb{a}|\pmb{s})A^{\pi
        _{\pmb{\theta}_k}
        }(\pmb{s},\pmb{a})\right].
\end{equation}
It can preserve the set of local optima to which gradient ascent converges. 
Although biased, this estimator has been widely used in many state-of-the-art off-policy algorithms \citep{schulman2015trust,schulman2017proximal,wang2017sample} due to its simplicity and computational efficiency.

 The ILR and MLR policy gradient estimators for the approximation~\eqref{eq: off-policy policy gradient (simplified)} can be obtained by replacing stationary state-action distribution $\rho_{\pmb\theta}(\pmb{s},\pmb{a})$ with policy $\pi_{\pmb\theta}(\pmb{a}|\pmb{s})$. We conclude this section by
pointing out that we proceed the theoretical analysis with the unbiased off-policy policy gradient \eqref{eq: off-policy policy gradient} and its SAA estimators \eqref{eq: LR-based policy gradient estimator (model-free)}-\eqref{BLR-M sequential gradient estimator} in the following sections while the estimator \eqref{eq: off-policy policy gradient (simplified)} is used in the algorithm and the empirical study.

\subsection{Actor-Critic}
\label{subsec:model free GreenSimulation_PolicyGradient}
The actor-critic is a widely used architecture for policy optimization. It includes two main components: actor and critic. The actor corresponds to an action-selection policy, mapping state to action in a probabilistic manner. The critic
corresponds to a value function, mapping state to expected cumulative future
reward. The actor searches the optimal policy parameters by using stochastic gradient ascent (SGA) while the critic estimates the action-value function $Q^\pi(\pmb{s},\pmb{a})$ by an appropriate policy evaluation algorithm such as temporal-difference learning or Q-learning. 
Usually, the critic $V^\pi(\pmb{s})$ is approximated by a state-value function $V_{\pmb{w}}(\pmb{s})$ specified by parameters $\pmb{w}$ and the actor is represented by a policy function $\pi_{\pmb\theta}(\pmb{a}|\pmb{s})$ specified by $\pmb{\theta}$. Such functional approximation of critic can be used in estimating the state-value function $\hat{V}(\pmb{s})=V_{\pmb{w}}(\pmb{s})$ and thus the TD error \eqref{eq: advantage estimate} becomes $\delta(\pmb{s},\pmb{a},\pmb{s}^\prime)=r(\pmb{s},\pmb{a})+\gamma V_{\pmb{w}}(\pmb{s}^\prime)-V_{\pmb{w}}(\pmb{s})$. 
Following the studies in \cite{bhatnagar2009natural}, a typical actor-critic update can be written as
\begin{align}
    \text{\textbf{TD Error}}:& \quad \delta_k = r_t + \gamma V_{\pmb{w}_k}(\pmb{s}^\prime) - V_{\pmb{w}_k}(\pmb{s}) \label{eq: TD error}\\
     \text{\textbf{Critic}}:&  \quad \pmb{w}_{k+1} = \pmb{w}_{k}  + \eta_{w}\delta_k \nabla_w V_{\pmb{w}_k}(\pmb{s}) \label{eq: critic parameter}\\
     \text{\textbf{Actor}}: & \quad \pmb\theta_{k+1} = \pmb\theta_k +\eta_\theta \nabla J(\pmb\theta)\label{eq: actor parameter}
\end{align}

\noindent where $\eta_w$ and $\eta_\theta$ represent learning rates for critic and actor respectively. \textit{The step in \eqref{eq: TD error} is also referred as to temporal difference learning used to estimate the advantage function.} The policy gradient $\nabla J(\pmb\theta)$ in \eqref{eq: actor parameter} is estimated by 
the MLR policy gradient estimator \eqref{BLR-M sequential gradient estimator}.


  \section{Variance Reduction Experience Replay for Policy Gradient Estimation}
 \label{sec:algorithms}
 
 In this section, we derive the variance reduced experience replay method 
 and present a general VRER based actor-critic algorithm. 
 We provide a selection criteria in Section~\ref{subsec: selection rule} that can automatically find the most relevant historical transition observations for constructing the reuse set $U_k$ at each $k$-th iteration and improving the policy gradient estimation accuracy.
The dependencies between historical observations collected under selected behavior policies in the previous iterations lead to a general obstacle for most historical sample reusing mechanisms. 
The MLR, used to leverage the information from previous transition observations, requires sampling distributions to be independent. Thus, in the proposed algorithm, we reduce this interdependence through randomly sampling \citep{mnih2015human}. 
 Specifically, we separate the optimal policy learning algorithm into two steps. In the online step, we collect new samples by following the target policy specified by $\pmb{\theta}_k$. In the offline step, we select historical samples and train the actor critic model by stochastic gradient ascent. In this way, we can view the offline step as a normal offline optimization problem where samples are assumed to be randomly generated from a set of independent stationary state-action distributions $\rho_i$ with $i\leq k$. Therefore, 
for the theoretical study in Section~\ref{subsec: selection rule},
 we assume that the transitions are drawn from a set of independent stationary state-action distributions in the offline optimization step.

 \subsection{Selection Rule for MLR based Policy Gradient Estimator}
 \label{subsec: selection rule}
 We first introduce some properties of MLR based policy gradient estimator. 
 Similar results can be found in \cite{veach1995optimally}, \cite{FengGreenSim2017} and \cite{zheng2021green}.
 
 \begin{lemma} \label{lemma: MLR unbiased}
Conditional on the reuse set ${U}_k$, the MLR policy gradient estimator is unbiased, i.e.,
$$ \E\left[ \left. \widehat{\nabla J}^{MLR}_{k} \right|{U}_k\right]
=\E\left[\left.g_k(\pmb{\tau})
 \right|\pmb{\theta}_k\right]=\nabla J(\pmb\theta_k).
 $$
\end{lemma}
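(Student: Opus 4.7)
The plan is to compute the conditional expectation directly by unwinding the definition of the MLR estimator in \eqref{BLR-M sequential gradient estimator} and exploiting the fact that, conditional on $U_k$ and as argued at the end of Section~\ref{sec:algorithms}, the transitions $(\pmb{s}_t^{(i,j)},\pmb{a}_t^{(i,j)})$ are treated as i.i.d.\ draws from the stationary state-action distribution $\rho_{\pmb\theta_i}$ for each $i \in U_k$. The proof should be essentially algebraic; the argument parallels the unbiasedness of the generic MLR estimator given in the computation immediately after equation~\eqref{eq: MLR estimator}, but applied to the state-action transition setting.

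First I would use linearity of expectation to push $\E[\,\cdot\,\mid U_k]$ inside the double sum over $i$ and $j$. Because the inner summand depends on $(\pmb{s}_t^{(i,j)},\pmb{a}_t^{(i,j)})$ only through a function evaluated at a sample from $\rho_{\pmb\theta_i}$, the expectation reduces, independent of $j$, to
\begin{equation*}
\E_{\rho_{\pmb\theta_i}}\!\left[f_k(\pmb{s},\pmb{a})\,g_k(\pmb{s},\pmb{a})\right]
= \int \frac{\rho_{\pmb\theta_k}(\pmb{s},\pmb{a})}{\frac{1}{|U_k|}\sum_{i'\in U_k}\rho_{\pmb\theta_{i'}}(\pmb{s},\pmb{a})}\,g_k(\pmb{s},\pmb{a})\,\rho_{\pmb\theta_i}(\pmb{s},\pmb{a})\,d\pmb{s}\,d\pmb{a},
\end{equation*}
so the factor $1/n$ and the sum over $j$ cancel.

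Next I would interchange the outer sum $\frac{1}{|U_k|}\sum_{i\in U_k}$ with the integral (justified by the assumption $\rho_i(\pmb{x})>0$ whenever $\rho_k(\pmb{x})g_k(\pmb{x})\neq 0$, so the integrand is absolutely integrable), collecting the $\rho_{\pmb\theta_i}$ terms to form $\ell_M(\pmb{s},\pmb{a})=\frac{1}{|U_k|}\sum_{i\in U_k}\rho_{\pmb\theta_{i}}(\pmb{s},\pmb{a})$. The mixture density in the numerator then exactly cancels the denominator of the mixture likelihood ratio $f_k$, leaving
\begin{equation*}
\E\!\left[\widehat{\nabla J}^{MLR}_{k}\,\bigl|\,U_k\right] = \int \rho_{\pmb\theta_k}(\pmb{s},\pmb{a})\,g_k(\pmb{s},\pmb{a})\,d\pmb{s}\,d\pmb{a},
\end{equation*}
which by \eqref{eq: off-policy policy gradient} is exactly $\nabla J(\pmb\theta_k) = \E[g_k(\pmb{\tau})\mid \pmb\theta_k]$.

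There is no real obstacle: the only subtle point is the conditioning on $U_k$, since in practice $U_k$ is constructed from the same historical transitions that appear inside the estimator, which would ordinarily couple the mixture weights to the samples. This is precisely why the discussion preceding the lemma decouples the online sampling step from the offline selection-and-update step so that, conditional on $U_k$, the sampling distributions behave as a fixed independent collection; once this is granted the unbiasedness reduces to the standard multiple-importance-sampling identity.
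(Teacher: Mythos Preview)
Your proposal is correct and mirrors exactly what the paper does: the paper states Lemma~\ref{lemma: MLR unbiased} without a separate proof, relying on the displayed computation of $\E[\widehat{\nabla J}_k^{MLR}]$ immediately after~\eqref{eq: MLR estimator} (attributed to \cite{veach1995optimally}), which is precisely the linearity-then-cancellation argument you outline. Your additional remark on the conditioning subtlety is a useful clarification not made explicit in the paper.
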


\begin{proposition}\label{prop: MLR less variance than ILR}
Conditional on 
the reuse set
${U}_k$, the total variance of the MLR policy gradient estimator is smaller and equal to that of the average ILR policy gradient estimator,
\begin{equation}\label{eq:inequality of variance policy gradients}
    \mbox{Tr}\left(\mbox{Var}\left[\left.\widehat{\nabla J}^{MLR}_{k}\right|{U}_k\right]\right)\leq\mbox{Tr}\left(\mbox{Var}\left[\left.{\widehat{\nabla J}}^{ILR}_{k}\right|{U}_k\right]\right).
    \nonumber 
\end{equation}
\end{proposition}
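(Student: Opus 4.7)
The plan is to exploit sample independence---granted by the paper's offline-step assumption---so that the variance of each estimator decomposes into a sum of per-sample variances, and then to combine two pointwise inequalities: AM--HM on the likelihood denominators and Cauchy--Schwarz on the per-source means. Since the trace of the covariance equals the sum of per-coordinate variances, the claim reduces to the scalar inequality for each coordinate of the vector-valued $g_k$; I would prove it componentwise and recombine at the end.

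For the scalar variance computation, I would view each estimator as an average of $|U_k|\cdot n$ independent summands of the form $\rho_k(\pmb{x}^{(i,j)})g_k(\pmb{x}^{(i,j)})/\rho_i(\pmb{x}^{(i,j)})$ (for ILR) and $\rho_k(\pmb{x}^{(i,j)})g_k(\pmb{x}^{(i,j)})/\ell_M(\pmb{x}^{(i,j)})$ (for MLR), with $\pmb{x}^{(i,j)}\sim\rho_i$. Applying $\mbox{Var}[Y]=\E[Y^2]-\E[Y]^2$ term-by-term and collapsing via the identity $\sum_{i\in U_k}\rho_i=|U_k|\ell_M$, a direct computation yields, for each scalar coordinate,
\begin{align*}
|U_k|^2 n\cdot\mbox{Var}\!\left[\widehat{\nabla J}^{ILR}_k\right]&=\sum_{i\in U_k}\int\frac{(\rho_k g_k)^2}{\rho_i}\,d\pmb{x}-|U_k|\,(\nabla J(\pmb\theta_k))^2,\\
|U_k|^2 n\cdot\mbox{Var}\!\left[\widehat{\nabla J}^{MLR}_k\right]&=|U_k|\int\frac{(\rho_k g_k)^2}{\ell_M}\,d\pmb{x}-\sum_{i\in U_k}\beta_i^2,
\end{align*}
where $\beta_i\defeq\int\rho_i\,\rho_k g_k/\ell_M\,d\pmb{x}$ satisfies $\sum_{i\in U_k}\beta_i=|U_k|\nabla J(\pmb\theta_k)$ by the definition of $\ell_M$.

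The finish is the clean addition of two pointwise inequalities. Because $\ell_M$ is the arithmetic mean of $\{\rho_i\}_{i\in U_k}$, AM--HM gives $|U_k|/\ell_M(\pmb{x})\leq\sum_{i\in U_k}1/\rho_i(\pmb{x})$ wherever $\rho_k g_k\neq 0$ (permitted under the paper's standing assumption that $\rho_i>0$ on that set); multiplying by $(\rho_k g_k)^2$ and integrating yields $|U_k|\int(\rho_k g_k)^2/\ell_M\,d\pmb{x}\leq\sum_{i\in U_k}\int(\rho_k g_k)^2/\rho_i\,d\pmb{x}$. Cauchy--Schwarz applied to $(\beta_1,\ldots,\beta_{|U_k|})$ then gives $\sum_{i\in U_k}\beta_i^2\geq\left(\sum_{i\in U_k}\beta_i\right)^{2}/|U_k|=|U_k|(\nabla J(\pmb\theta_k))^2$. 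Adding these inequalities delivers the scalar MLR$\leq$ILR variance bound; summing over the coordinates of $g_k$ then yields the claimed trace inequality. The main obstacle I anticipate is the bookkeeping of the $1/(|U_k|n)$ and $1/(|U_k|^2n)$ normalizations that arise from averaging both across sources $i$ and within-source replications $j$, and arranging matters so that AM--HM exactly handles the ``second-moment'' half of each expression while Cauchy--Schwarz exactly handles the ``mean-squared'' half.
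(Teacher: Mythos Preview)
Your argument is correct. The paper itself does not give a self-contained proof of this proposition; it simply cites \cite{zheng2021green} for the multivariate case and \cite{martino2015adaptive} for the scalar case. Your route---decomposing the variance of each estimator by independence across sources and replications, then handling the second-moment piece via AM--HM on $\{\rho_i\}_{i\in U_k}$ and the squared-mean piece via Cauchy--Schwarz on $\{\beta_i\}_{i\in U_k}$---is precisely the standard argument underlying those references, and your bookkeeping of the $1/(|U_k|^2 n)$ normalization is accurate. The componentwise reduction to scalars is the right way to obtain the trace inequality, and the identity $\sum_{i\in U_k}\rho_i=|U_k|\,\ell_M$ is what makes the MLR second moment collapse to $|U_k|\int(\rho_k g_k)^2/\ell_M$. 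One minor caveat: AM--HM requires \emph{all} $\rho_i(\pmb{x})>0$ on the support of $\rho_k g_k$, not just one; the paper states this absolute-continuity assumption for a generic sampling distribution, so it is implicitly assumed to hold for every $i\in U_k$, but it is worth making that explicit in your write-up.
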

\begin{proof}
Similar proofs for multi-dimensional case can be found in \cite[Proposition 1]{zheng2021green}, and see the proof for one-dimensional case in \cite[Theorem~A.2]{martino2015adaptive}.
\end{proof}

The perspective of multiple importance sampling and variance reduction experience replay is to select and reuse historical transitions generated from those behavioral policies and sampling distributions that are close to the target one. 
We propose the selection criteria in Theorem~\ref{thm:Online selection rule}, which measures the distance between the behavioral and target distribution based on the variance of policy gradient estimators obtained by using historical samples versus new samples generated in the current $k$-th iteration.

   \begin{theorem}[Selection Rule]
  \label{thm:Online selection rule}
At $k$-th iteration with the target distribution $\rho_k$, the reuse set ${U}_k$ is created to include the stationary distributions, i.e., $\rho_i$ specified by $(\pmb{\theta}_i,\pmb{w}_i)$ with $i\leq k$, whose ILR policy gradient estimator variance is no greater than $c$ times the total variance of the vanilla PG estimator
for some constant $c>1$. Mathematically,
\begin{equation} 
\label{eq: selection rule online}
    \mbox{Tr}\left(\mbox{Var}
    \left[\left. \widehat{\nabla J}^{ILR}_{i,k} \right|M_k\right]\right)\leq c\mbox{Tr}\left(\mbox{Var}
    \left[ \left. \widehat{\nabla J}^{PG}_k \right|M_k\right]\right).
\end{equation}
Then, based on such reuse set ${U}_k$, the total variance of the MLR policy gradient estimator~\eqref{BLR-M sequential gradient estimator} is no greater than ${c}/{|{U}_k|}$ times the total variance of the vanilla PG estimator,
\begin{equation} 
\label{eq: variance reduction (online)}
    \mbox{Tr}\left(\mbox{Var}\left[
    \left. \widehat{\nabla J}^{MLR}_{k} \right|M_k\right]\right)\leq \frac{c}{|{U}_k|}\mbox{Tr}\left( \mbox{Var}\left[
    \left. \widehat{\nabla J}^{PG}_k \right|M_k\right]\right).
\end{equation} 
 \end{theorem}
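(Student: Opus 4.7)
The plan is to chain three ingredients already made available by the paper: Proposition 1 (MLR variance dominates the averaged ILR variance), the independence of the per-distribution ILR estimators (justified in the offline step via random sampling, as discussed above the theorem), and the per-component selection rule \eqref{eq: selection rule online}. Everything is conditional on $M_k$, so the reuse set $U_k$ and its cardinality can be treated as fixed throughout.

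First I would start from Proposition 1 applied conditional on $U_k$ to write
\begin{equation*}
\mathrm{Tr}\!\left(\mathrm{Var}\!\left[\widehat{\nabla J}^{MLR}_{k}\,\big|\,M_k\right]\right)
\;\leq\;
\mathrm{Tr}\!\left(\mathrm{Var}\!\left[\widehat{\nabla J}^{ILR}_{k}\,\big|\,M_k\right]\right),
\end{equation*}
where $\widehat{\nabla J}^{ILR}_{k}=\frac{1}{|U_k|}\sum_{i\in U_k}\widehat{\nabla J}^{ILR}_{i,k}$ as defined in \eqref{eq: LR-based policy gradient estimator (model-free)}. Next, using the independence assumption stated right before Section~\ref{subsec: selection rule} (the transitions reused in the offline step are drawn from independent stationary state-action distributions $\rho_i$, $i\in U_k$), the per-component ILR estimators $\{\widehat{\nabla J}^{ILR}_{i,k}\}_{i\in U_k}$ are conditionally independent given $M_k$. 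Hence the variance of their average decomposes as
\begin{equation*}
\mathrm{Tr}\!\left(\mathrm{Var}\!\left[\widehat{\nabla J}^{ILR}_{k}\,\big|\,M_k\right]\right)
=\frac{1}{|U_k|^2}\sum_{i\in U_k}
\mathrm{Tr}\!\left(\mathrm{Var}\!\left[\widehat{\nabla J}^{ILR}_{i,k}\,\big|\,M_k\right]\right).
\end{equation*}

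Finally, by construction of the reuse set, the selection rule \eqref{eq: selection rule online} bounds each term in the sum by $c\cdot\mathrm{Tr}(\mathrm{Var}[\widehat{\nabla J}^{PG}_k|M_k])$. Summing the $|U_k|$ terms and dividing by $|U_k|^2$ yields
\begin{equation*}
\mathrm{Tr}\!\left(\mathrm{Var}\!\left[\widehat{\nabla J}^{MLR}_{k}\,\big|\,M_k\right]\right)
\;\leq\;\frac{|U_k|\cdot c}{|U_k|^2}\,\mathrm{Tr}\!\left(\mathrm{Var}\!\left[\widehat{\nabla J}^{PG}_k\,\big|\,M_k\right]\right)
=\frac{c}{|U_k|}\,\mathrm{Tr}\!\left(\mathrm{Var}\!\left[\widehat{\nabla J}^{PG}_k\,\big|\,M_k\right]\right),
\end{equation*}
which is the claimed inequality \eqref{eq: variance reduction (online)}.

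The only nontrivial step is the second one: the decomposition of the averaged ILR variance across components relies on conditional independence of the per-distribution estimators. This is where I would expect a careful reader to push back, because in the actual algorithm the $\pmb\theta_i$'s are visited sequentially and their transitions are correlated through the learning dynamics. My proof would simply invoke the independence assumption introduced explicitly in the paragraph preceding Section~\ref{subsec: selection rule} (randomized replay separating online collection from offline updates). If one did not assume independence, one would have to add cross-covariance terms and the clean $1/|U_k|$ factor would degrade; isolating this as the load-bearing assumption (rather than attempting to weaken it) is the cleanest route.
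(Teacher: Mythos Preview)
Your proposal is correct and follows essentially the same approach as the paper: apply Proposition~\ref{prop: MLR less variance than ILR} to bound the MLR variance by the averaged ILR variance, use conditional independence to split the averaged ILR variance into $\frac{1}{|U_k|^2}\sum_{i\in U_k}\mathrm{Tr}(\mathrm{Var}[\widehat{\nabla J}^{ILR}_{i,k}\mid M_k])$, and then apply the selection rule term-by-term. Your explicit identification of the independence step as the load-bearing assumption matches the paper's own discussion preceding the theorem.
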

 
\begin{proof}
We screen all historical models
and select the samples/models that satisfy the rule \eqref{eq: selection rule online}. We denote by ${U}_k$ as the index set of models that are reused or equivalently experience to be replayed. Conditional on all visited models ${M}_k$, the 
historical observations, i.e., $\{(\pmb{s},\pmb{a},\pmb{s}^\prime)^{(i,j)}
~~\mbox{with} ~~(\pmb\theta_i,\pmb{w}_i)\in{M}_k 
~~\mbox{and} ~~ j=1,2,\ldots,n\}$, 
are independent. Thus, we have
\begin{eqnarray}
  \mbox{Tr}\left(\mbox{Var}\left[\left.
  \widehat{\nabla J}^{MLR}_{k}\right|M_k\right]\right)
  &\stackrel{(\star)}{\leq}& \mbox{Tr}\left(\mbox{Var}\left[
  \left. \widehat{\nabla J}^{ILR}_{k} \right|M_k\right]\right)
  =\frac{1}{|{U}_k|^2}\sum_{i\in{U}_k}\mbox{Tr}\left(\mbox{Var}\left[ \left. \widehat{\nabla J}^{ILR}_{i,k} \right|M_k\right]\right) \nonumber\\
  &\stackrel{(\star\star)}{\leq}& 
  \frac{c}{|{U}_k|^2}\sum_{i\in{U}_k}\mbox{Tr}\left(\mbox{Var}\left[ \left. \widehat{\nabla J}^{PG}_{k} \right|M_k\right]\right) 
  =
  \frac{c}{|{U}_k|}\mbox{Tr}\left(\mbox{Var}\left[
  \left. \widehat{\nabla J}^{PG}_{k} \right |M_k\right]\right) 
  \label{eq: variance inequality}
  \nonumber 
  \end{eqnarray}
where $(\star)$ follows by applying Proposition~\ref{prop: MLR less variance than ILR} and $(\star\star)$ holds by applying the selection rule \eqref{eq: selection rule online}. 
\end{proof}

Theorem~\ref{thm:Online selection rule} provides the selection criteria for dynamically and automatically determining the reuse set $U_k$. It shows that the MLR can greatly reduce the policy gradient estimation variance compared to the vanilla policy gradient estimator through reusing the historical transition observations in $U_k$. During the optimal policy search, the number of reuse transitions (or $|{U}_k|$) increases as the iteration $k$ increases. The total variance of the MLR based policy gradient estimator can be significantly reduced.

\subsection{Green Simulation Assisted Policy Gradient Algorithm for Partial Trajectory Reuse} 
\label{subsec: GS-PG algorithm}
   
We summarize the proposed VRER based policy optimization algorithm in an actor-critic framework in Algorithm~\ref{algo: online}.
At each $k$-th iteration, we generate $n$ transitions by running experiments 
following the target policy specified by parameters $\pmb{\theta}_k$ and update the observation set $\mathcal{D}_k$ in Step 1. We select the historical samples that satisfy the selection rule~\eqref{eq: selection rule online} and use the associated policies to create the reuse set ${U}_k$ in Step~2.
For computational reasons, we use the policy gradient estimator \eqref{eq: off-policy policy gradient (simplified)} in the algorithm. This approximation simplifies the calculation of the likelihood ratio ${\rho}_k/ \rho_i$ to the likelihood ratio of policies ${\pi_{\pmb\theta_k}}/ {\pi_{\pmb\theta_i}}$ and thus avoid the substantial computation involved in estimating the stationary distribution $d^\pi(\pmb{s})$. 

\begin{algorithm}[ht!]
  \small
\SetAlgoLined
\textbf{Input}: the selection threshold constant $c$; the maximum number of iterations $K$; the number of iterations in offline optimization $K_{off}$; 
the number of replications per iteration $n$; 
the set of historical trajectories from the real system $\mathcal{D}_0$; the set of policy parameters visited so far ${U}_0$; the set of stored likelihoods $\mathcal{L}_0$.
\\
\textbf{Initialize} actor parameter $\pmb\theta_1$ and critic parameter $\pmb{w}_1$. Store them in $M_1=M_0\cup\left\{(\pmb\theta_1, \pmb{w}_1\right)\}$\;
 \For{$k=1,2,\ldots, K$ 
 }{

 1. Collect a set of transitions $\mathcal{T}_k=\left\{(\pmb{s}_t,\pmb{a}_t, \pmb{s}_{t+1},r_t)\right\}_{t=1}^{n}$ from real system by running policy $\pmb{\pi}_{\pmb\theta_k}(\pmb{a}_t|\pmb{s}_t)$; Update the sets $\mathcal{D}_k \leftarrow \mathcal{D}_{k-1} \cup\mathcal{T}_k$\;

 

 2. Initialize ${U}_k=\emptyset$, screen all historical transitions and policies in ${U}_k$, and construct the reuse set ${U}_k$\;
 \For{$(\pmb\theta_i,\pmb{w}_i) \in {M}_k$}{
 (a) Compute and store the new likelihoods:
 \\ \quad \ \ $\mathcal{L}_k\leftarrow\mathcal{L}_{k-1}\cup\pi_{\pmb\theta_k}(\mathcal{D}_k)\cup\pi_{\pmb\theta_{[1:k]}}(\mathcal{T}_k)$
 \\

 (b) 
 \textbf{if} \ $\mbox{Tr}\left(\mbox{Var}\left[\left.\widehat{\nabla J}^{ILR}_{i,k}\right|M_k\right]\right) \leq c\mbox{Tr}\left(\mbox{Var}\left[\left.\widehat{\nabla J}^{PG}_{k}\right|M_k\right]\right)$ \ \textbf{then} \ ${U}_k \leftarrow {U}_k\cup \{ i\}.$
 }
 3. Reuse the historical samples associated with ${U}_k$ and stored likelihoods $\mathcal{L}_k$ to update actor and critic:
 
(a) Let $\pmb\theta_k^0=\pmb\theta_k$ and $\pmb{w}_k^0=\pmb{w}_k$\;

\For{$h=0, 1, ..., K_{\text{off}}$}{
(b) \textbf{TD Error}: $\delta_k^h = r_t + \gamma V_{\pmb{w}_k^h}(\pmb{s}^\prime) - Q_{\pmb{w}_k^h}(\pmb{s})$\;
(c) \textbf{Actor Update}:
 $\pmb{\theta}_{k}^{h+1} \leftarrow \pmb{\theta}_k^h+ \eta_k\widehat{\nabla J}^{MLR}_{k}$\;
(d) \textbf{Critic Update}:
$\pmb{w}_{k}^{h+1} = \pmb{w}_{k}^h  + \eta_k\delta_k \nabla_w V_{\pmb{w}_k^h}(\pmb{s})$\;
 }
 
 4. Update the actor and critic: $\pmb\theta_{k+1} = \pmb\theta_{k}^{K_{off}}$ and $\pmb{w}_{k+1} = \pmb{w}_{k}^{K_{off}}$\;
   5. Store them to the set ${M}_{k+1}={M}_k\cup\left\{(\pmb\theta_{k+1},\pmb{w}_{k+1})\right\}$;\
  
 }
\caption{Actor Critic Method with Variance Reduced Experience Replay.}\label{algo: online}
\end{algorithm}

The likelihoods are stored and reused in Step 2(a) to reduce the computation cost; see more details in \cite{zheng2021green}.
Specifically, as the iteration $k$ increases, the number of historical transitions increases. 
It can be computationally expensive to repeatedly calculate all likelihood ratios required for historical observation selection and policy gradient estimation. Thus, we save and reuse the previous calculated likelihoods. Let $\mathcal{T}_k=\{(\pmb{s}_{t},\pmb{a}_{t},\pmb{s}_{t+1},r_t)\}_{t=1}^{n}$ represent the
set of new transitions generated in the $k$-th iteration. 
Update the set of all transition observations, i.e., $\mathcal{D}_k \leftarrow \mathcal{D}_{k-1} \cup\mathcal{T}_k$. 
Then, the likelihoods of $\mathcal{T}_k$ under any previous visited policy $\pmb{\theta}_i$ are $\pi_{\pmb\theta_i}(\mathcal{T}_k)=\{\pi_{\pmb\theta_i}(\pmb{a}_t|\pmb{s}_t): (\pmb{s}_{t},\pmb{a}_{t},\pmb{s}_{t+1},r_t)\in\mathcal{T}_k\}$. Let $\pi_{\pmb{\theta}_{[1:k]}}(\mathcal{T}_k)=\{\pi_{\pmb\theta_i}(\mathcal{T}_k): i=1,\ldots,k\}$.
Therefore, at the $k$-th iteration, all newly generated likelihoods are the joint set of the values of historical samples at new policy $\pmb{\theta}_k$ and new samples at historical policies $\pmb{\theta}_i$, i.e., $\pi_{\pmb\theta_k}(\mathcal{D}_k)\cup \pi_{\pmb{\theta}_{[1:k]}}(\mathcal{T}_k)$. Then we 
get  
the MLR policy gradient estimate by applying (\ref{BLR-M sequential gradient estimator})
and train the actor and critic by following \eqref{eq: TD error}-\eqref{eq: actor parameter} in Step 3. In this offline step, we use the stochastic gradient ascent \citep{li2014efficient} to iteratively optimize the performance objective with the historical observations in ${U}_k$. At the end, we update actor and critic with latest parameters $\pmb\theta_{k}^{K_{off}}$ and $\pmb{w}_{k}^{K_{off}}$ from the offline optimization step and store them into memory buffer of models $M_{k+1}$. After that, we repeat the procedure until reaching to the budget limit specified by $K$ iterations.


We conclude this section by pointing out that the choice of optimizers for actor/critic training, the hyperparamter $K_{off}$ and the size of minibatch in SGA are task-specific. In some actor-critic algorithms, the termination of offline optimization or the number of iterations $K_{off}$ are often not fixed. For example, PPO uses early stopping method to determine $K_{off}$: terminate the offline training if the KL divergence between behavioral and target policies is smaller than some threshold. The interested reader is referred to the literature of stochastic gradient methods \citep{Goodfellow-et-al-2016} for details of hyperparamter tuning.

\section{Empirical Study}
\label{sec: empirical study}
In this section, we present the empirical study assessing the performance VRER in combination with actor critic algorithm \citep{bhatnagar2009natural} and PPO algorithm \citep{schulman2017proximal}. 
We study the optimization convergence behavior by using control tasks in Section \ref{subsec: convergence}, present the sensitivity analysis on the reuse set selection threshold constant $c$ in Section~\ref{subsec: sensitivity analysis}, and investigate the effects of employing VRER to the gradient variance reduction in Section \ref{subsec: variance reduction}. 
For the implementation, we use two open-sourced libraries, Keras
and TensorFlow 
for policy modeling and automatic differentiation. In addition, we use OpenAI gym \citep{1606.01540} to provide the simulation environment of Cartpole and Acrobot problems. 

We adopt the yeast cell fermentation simulator from \cite{zheng2021policy}. 
To provide the prediction on the process dynamics, we add 4 additional state variables, including time $t$, the growth rate $\dot{X}_f$, the production rate of citrate acid $\dot{C}$, and the consumption rate of substrate $\dot{S}$. Therefore, the state vector is $\pmb{s}=(X_{f},C,S,N,V,t,\dot{X}_f, \dot{C}, \dot{S})$, where $X_f$ represents lipid-free cell mass; $C$ measures the citrate concentration, i.e., the actual ``product'' to be harvested at the end of the fermentation process, generated by the cells' metabolism; $S$ and $N$ are the amounts of substrate (a type of oil) and nitrogen, used for cell growth and production; and $V$ is the working volume of the entire batch. 
In this paper, we consider a setpoint control problem that aims to maintain the substrate concentration $S_t$ around a fixed value. The reward function is defined as $r(\pmb{s}_t)=
-(S_t-S_t^0)^2
$, where $S_t^0=20$ g/L is the setpoint of substrate concentration. We consider the feed rate as the action representing the amount of substrate added in each time interval.

The actor and critic models for Actor-Critic and PPO are adopted from the Keras implementation \citep{chollet2015keras}. The Actor-Critic model is composed of a shared initial layer with 128 neurons and separate outputs for the actor and critic. The PPO algorithm has separate actor and critic neural network models, both of which have two layers with 64 neurons. For the problems with discrete action, we use softmax activation function on top of the actor network, which calculates the probability of optimal actions. For the fermentation problem with a continuous action (feeding rate of substrate), we use the Gaussian policy for actor model \citep{sutton2018reinforcement}. 
As the feed rate is strictly regulated and it should stay within a regulation required acceptance range, we truncate the action sampled from Gassuain policy. At each $k$-th iteration, based on the results obtained from 30 macro-replications, we represent the estimation uncertainty of outputs (i.e., the expected discounted rewards and the total variance of policy gradient) by using the 95\% confidence bands based on asymptotic normality assumption. 

\subsection{Comparison of Algorithm Performance with and without Proposed VRER}
\label{subsec: convergence}

In this section, we compare the optimal policy learning performance of VRER using Actor-Critic and PPO algorithms on some classical continuous control benchmarks. We set the same initial learning rate for both actor and critic in Actor-Critic algorithm (i.e., Cartpole: $0.005$; Acrobot:  $0.001$; and Fermentation: 0.001). For PPO, the learning rates of actor and critic were set to be 0.001 and 0.005 respectively for all three examples. The selection threshold constant was set to be $c=1.5$ for all experiment in this section.

\begin{figure}[!thb]
     \centering
     \subfloat[CartPole]{%
         \centering
         \includegraphics[width=0.31\textwidth]{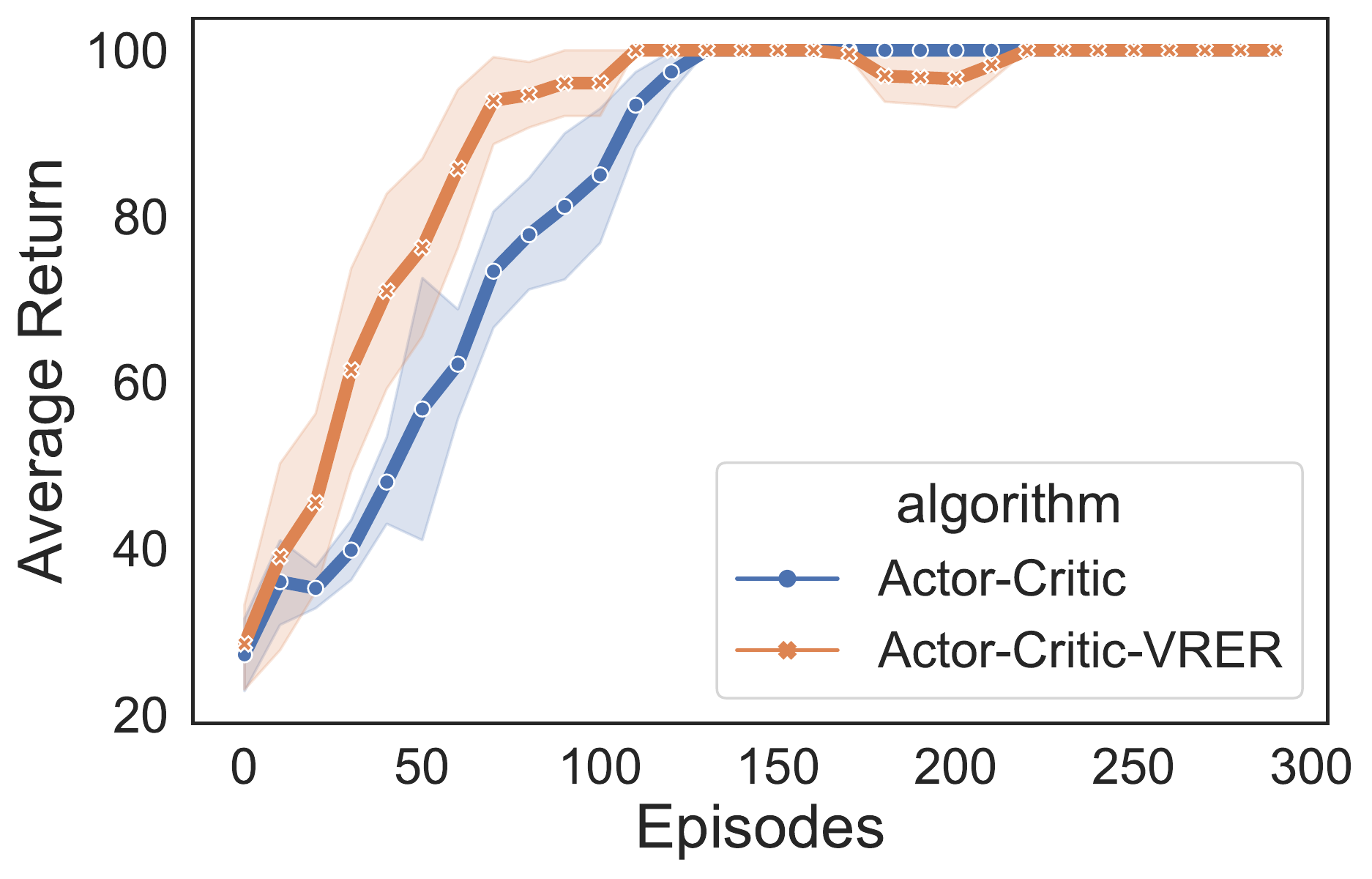}
         \label{fig: actor-critic convergence (CartPole)}}
     \subfloat[Acrobot]{
         \centering
         \includegraphics[width=0.32\textwidth]{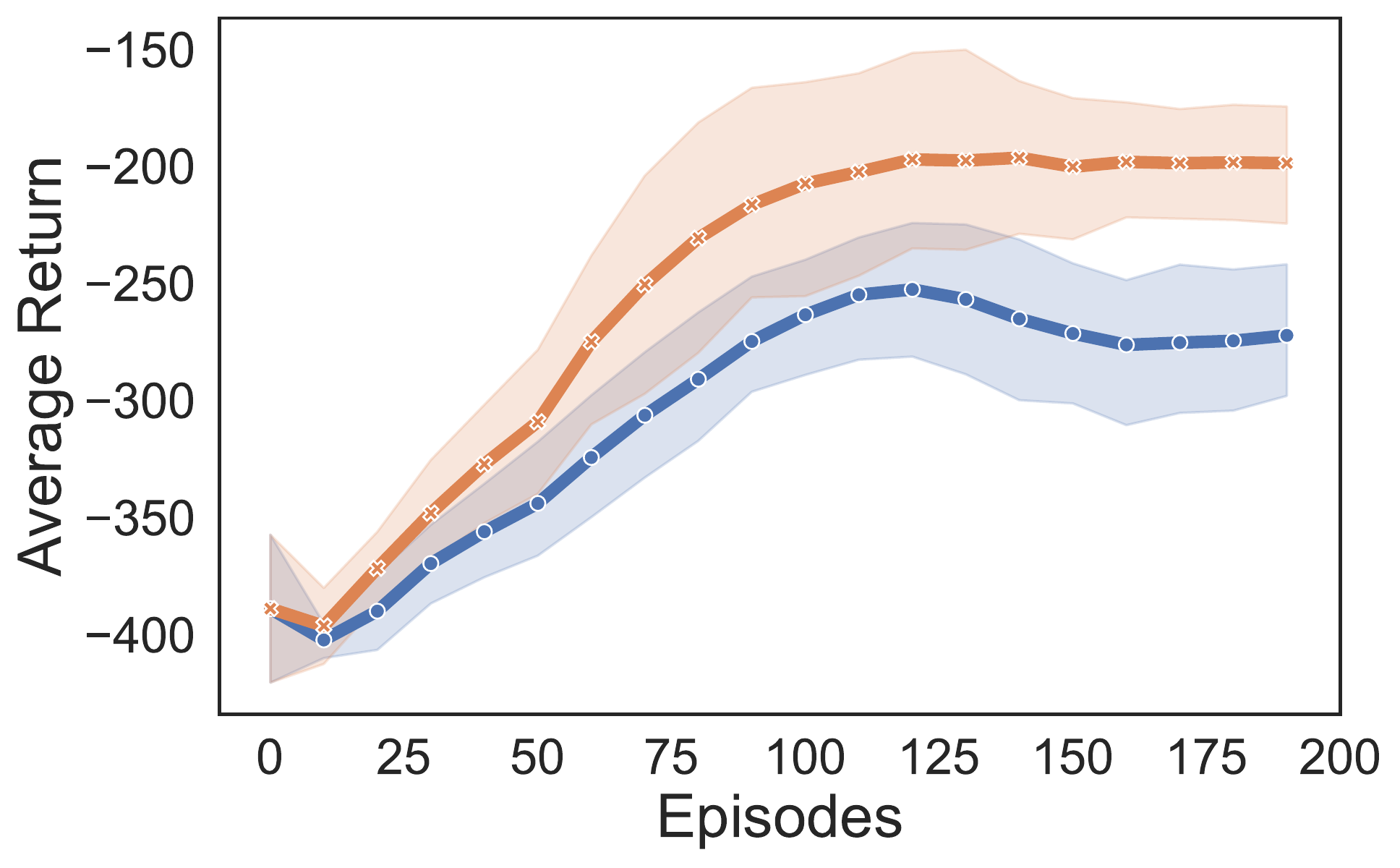}
         \label{fig: actor-critic convergence (Acrobot)}
     }
     \subfloat[Fermentation]{
         \centering
         \includegraphics[width=0.315\textwidth]{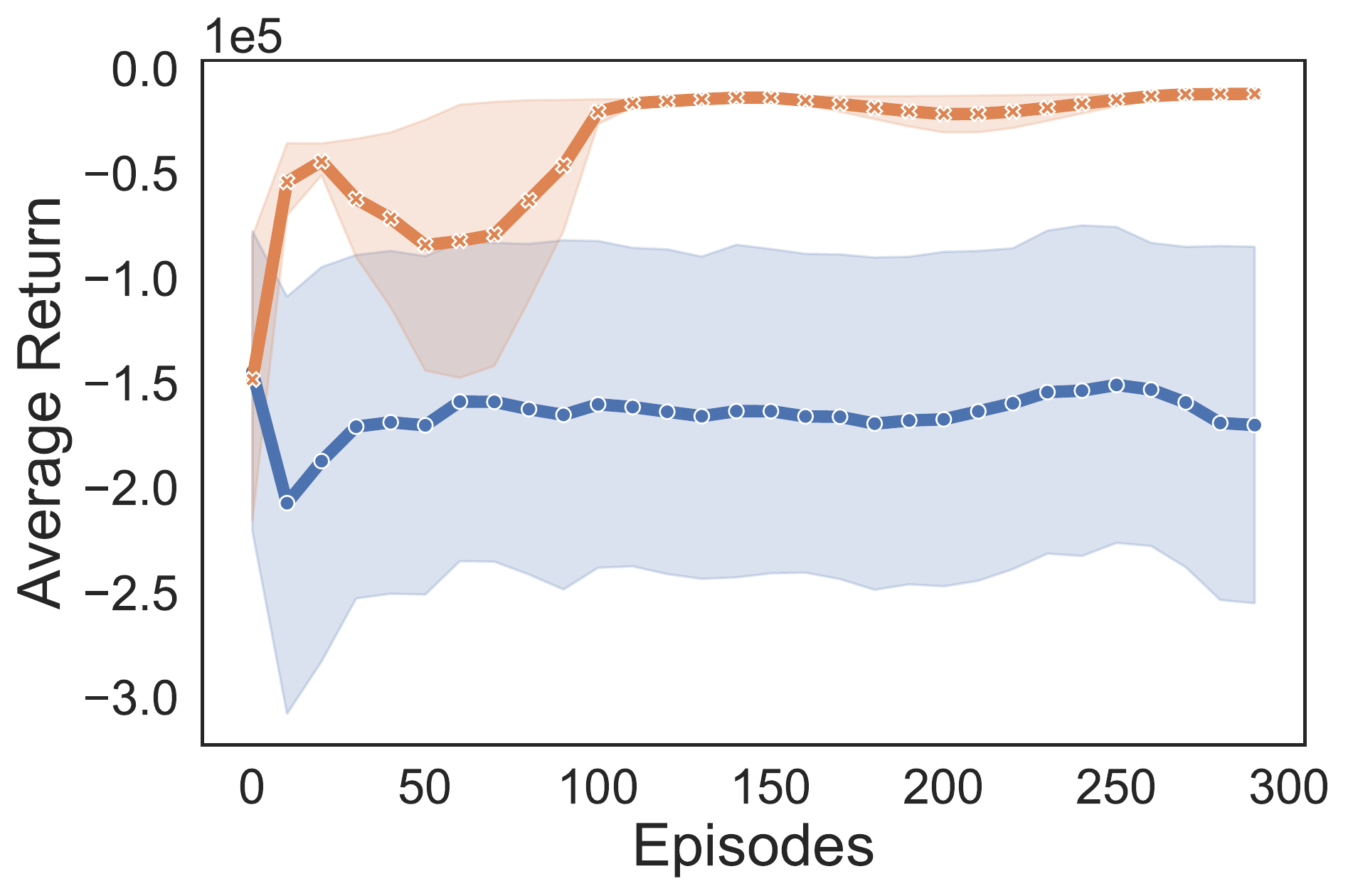}
         \label{fig: actor-critic convergence (Fermentation)}
     }
     \medskip
     \vspace{-0.1in}
     \caption{Convergence results for the Actor-Critic algorithm with and without using the proposed VRER. 
     } 
     \label{fig: actor-critic convergence}
     \vspace{-0.1in}
\end{figure}

We plot the mean performance curves and 95\% confidence intervals of the actor-critic with and without VRER in Figure~\ref{fig: actor-critic convergence}. For the Cartpole problem, although the Actor-Critic algorithms converge to the optimum with and without using VRER, the Actor-Critic-VRER shows significantly faster convergence than the Actor-Critic. This indicates that the use of VRER gives significant performance improvement.
Similar performance improvement can be also seen in Acrobot example (Figure~\ref{fig: actor-critic convergence (Acrobot)}), where Actor-Critic-ARER shows not only the convergence to the optimum but
also faster convergence. For the fermentation problem, Actor-Critic-VRER shows performance improvement while Actor-Critic method even fails to converge. 

We plot the mean performance curves and 95\% confidence intervals of PPO in Figure~\ref{fig: ppo convergence}. In Cartpole, the average return of PPO-VRER converges about 25 iterations earlier than PPO. In Acrobot and Fermentation problems, PPO-VRER shows better performance with higher average return and lower variance.


\begin{figure}[!thb]
     \centering
     \subfloat[CartPole]{%
         \centering
         \includegraphics[width=0.309\textwidth]{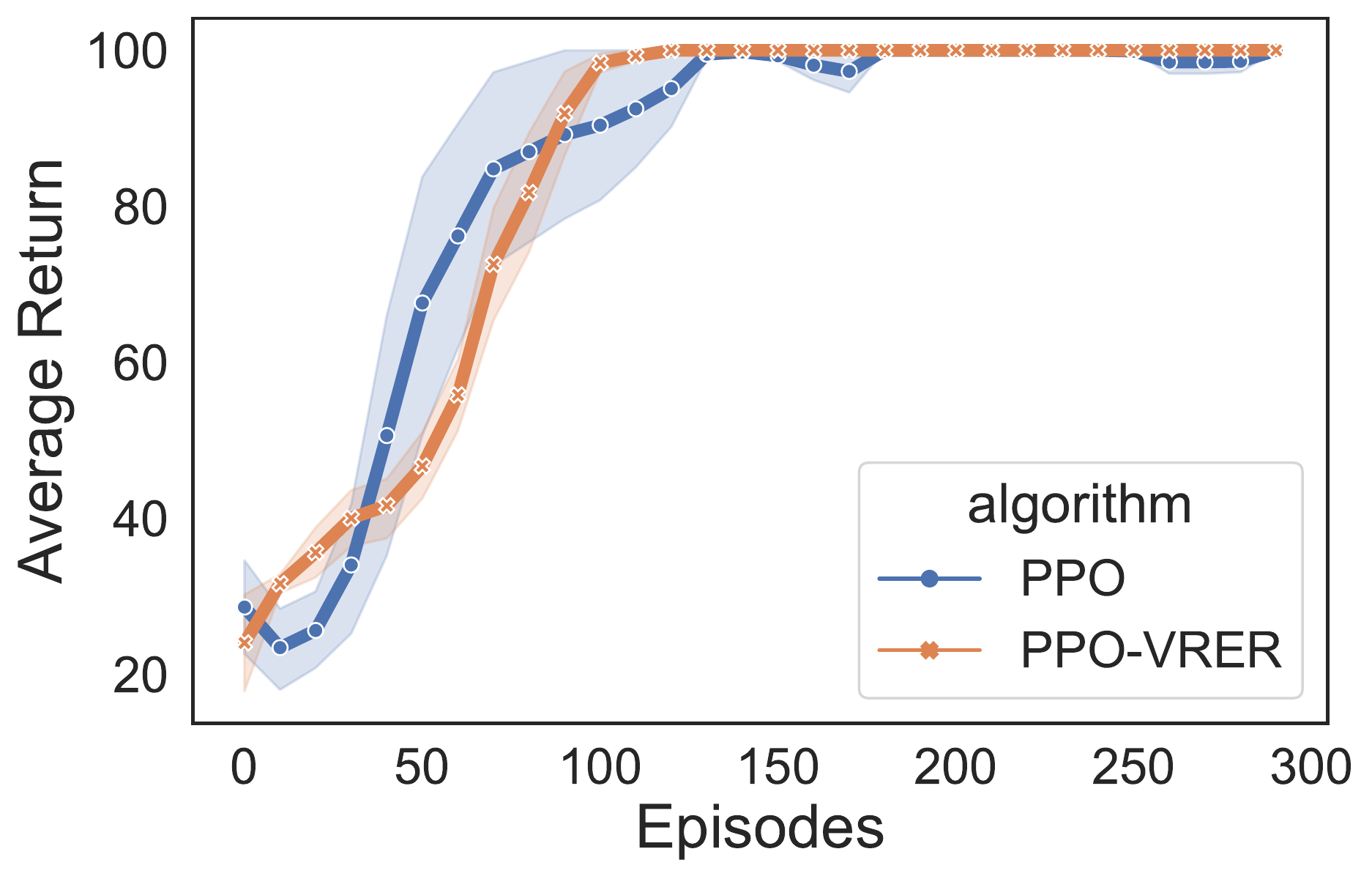}
         \label{fig: PPO convergence (CartPole)}}
     \subfloat[Acrobot]{
         \centering
         \includegraphics[width=0.32\textwidth]{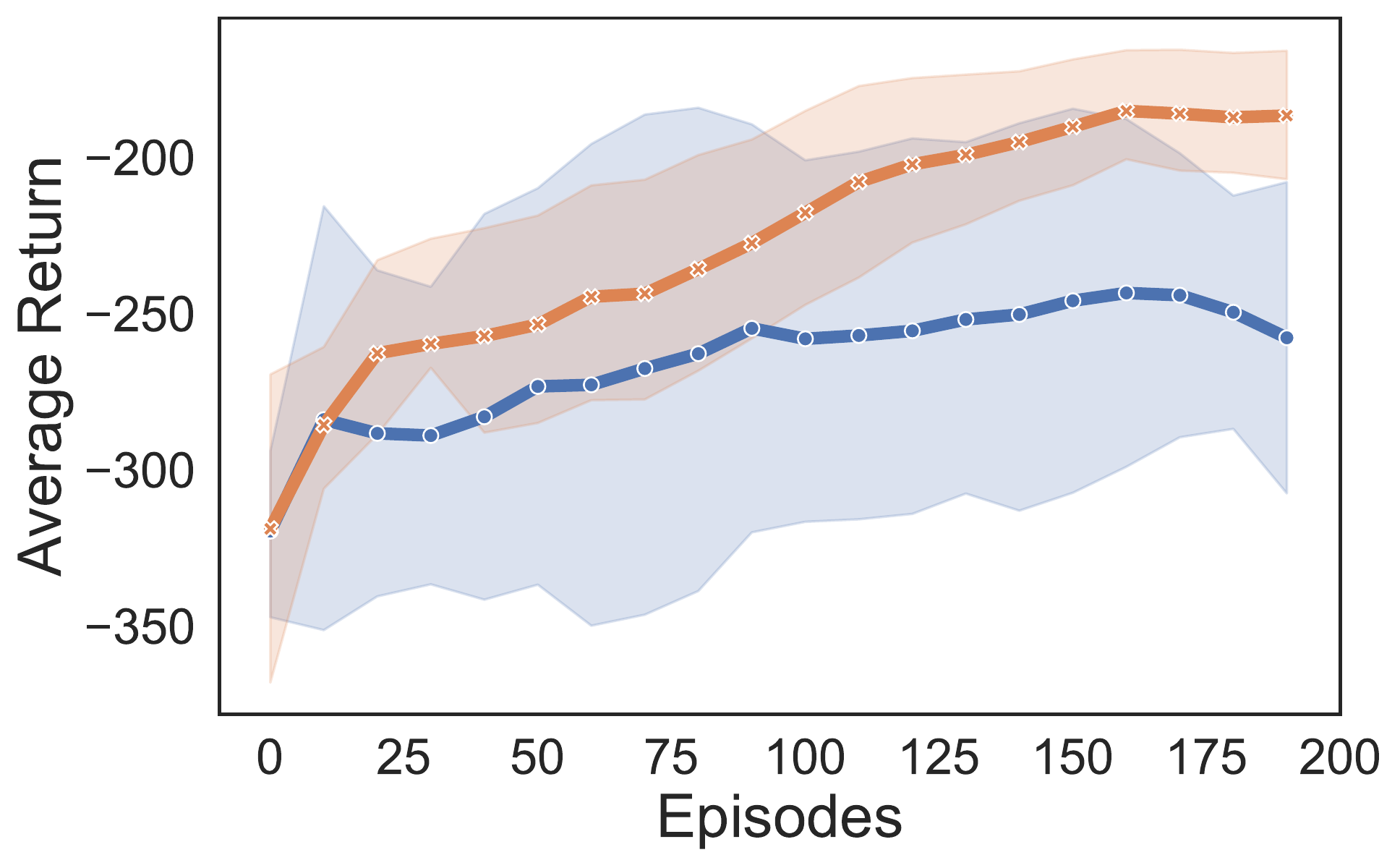}
         \label{fig: PPO convergence (Acrobot)}
     }
     \subfloat[Fermentation]{
         \centering
         \includegraphics[width=0.315\textwidth]{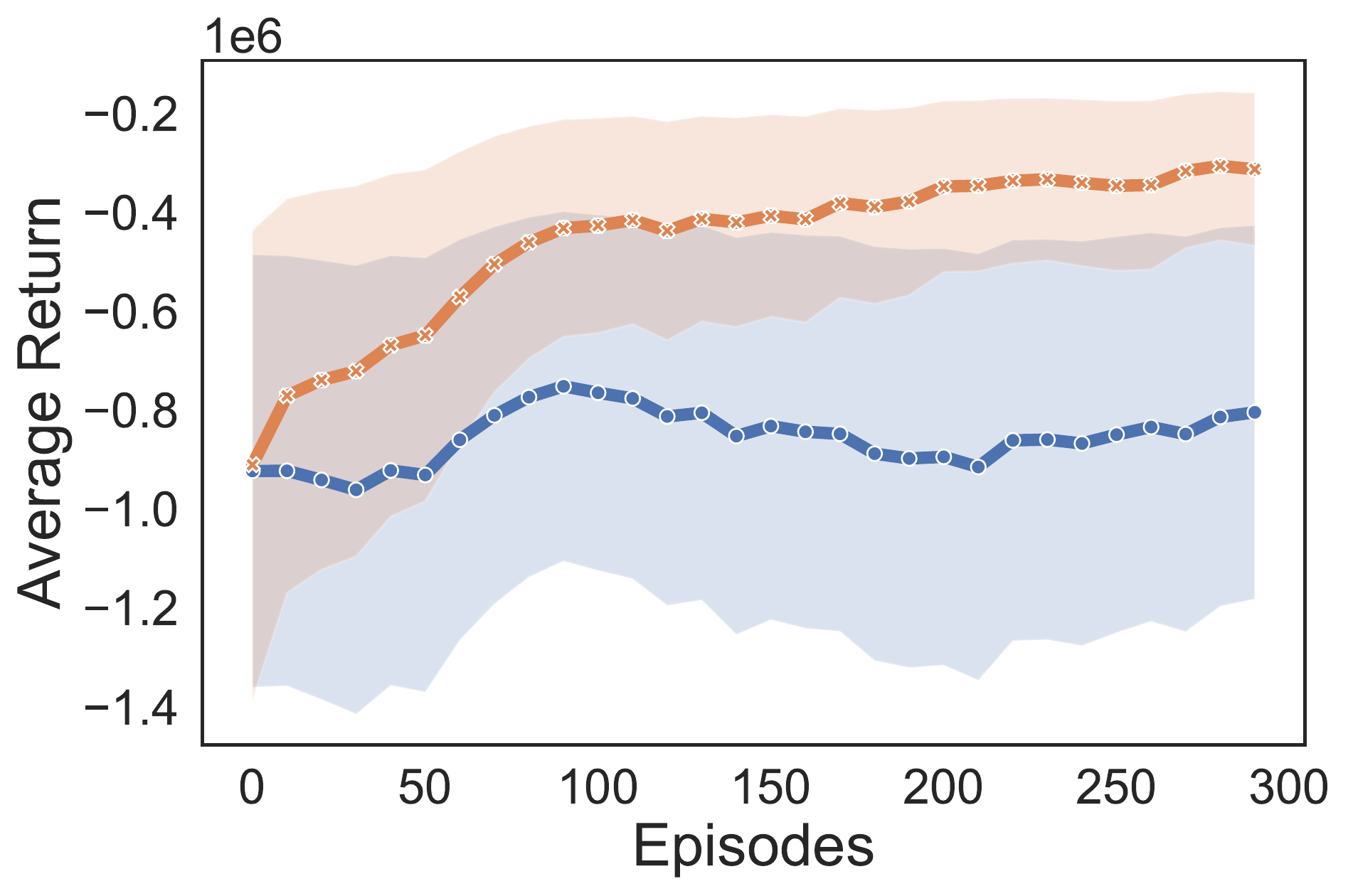}
         \label{fig: PPO convergence (Fermentation)}
     }
     \medskip
     \vspace{-0.1in}
     \caption{Convergence results for the PPO algorithm with and without using the proposed VRER. 
     } 
     \label{fig: ppo convergence}
     \vspace{-0.1in}
\end{figure}

\subsection{Sensitivity Analysis on the Selection of Reuse Threshold $c$}
\label{subsec: sensitivity analysis}

We study the sensitivity of convergence performance of actor critic and PPO algorithms with VRER to the choice of constant $c$ used in the selection criteria \eqref{eq: selection rule online}. Figure \ref{fig: sensitivity} shows the convergence behaviors when we solve the Cartpole problem with different values of $c$. All the performance curves stay close, which indicates that the convergence of the proposed VRER based policy optimization approach is robust to the choice of $c$. 

\begin{figure}[!thb]
     \centering
     \subfloat[Actor Critic]{%
         \centering
         \includegraphics[width=0.4\textwidth]{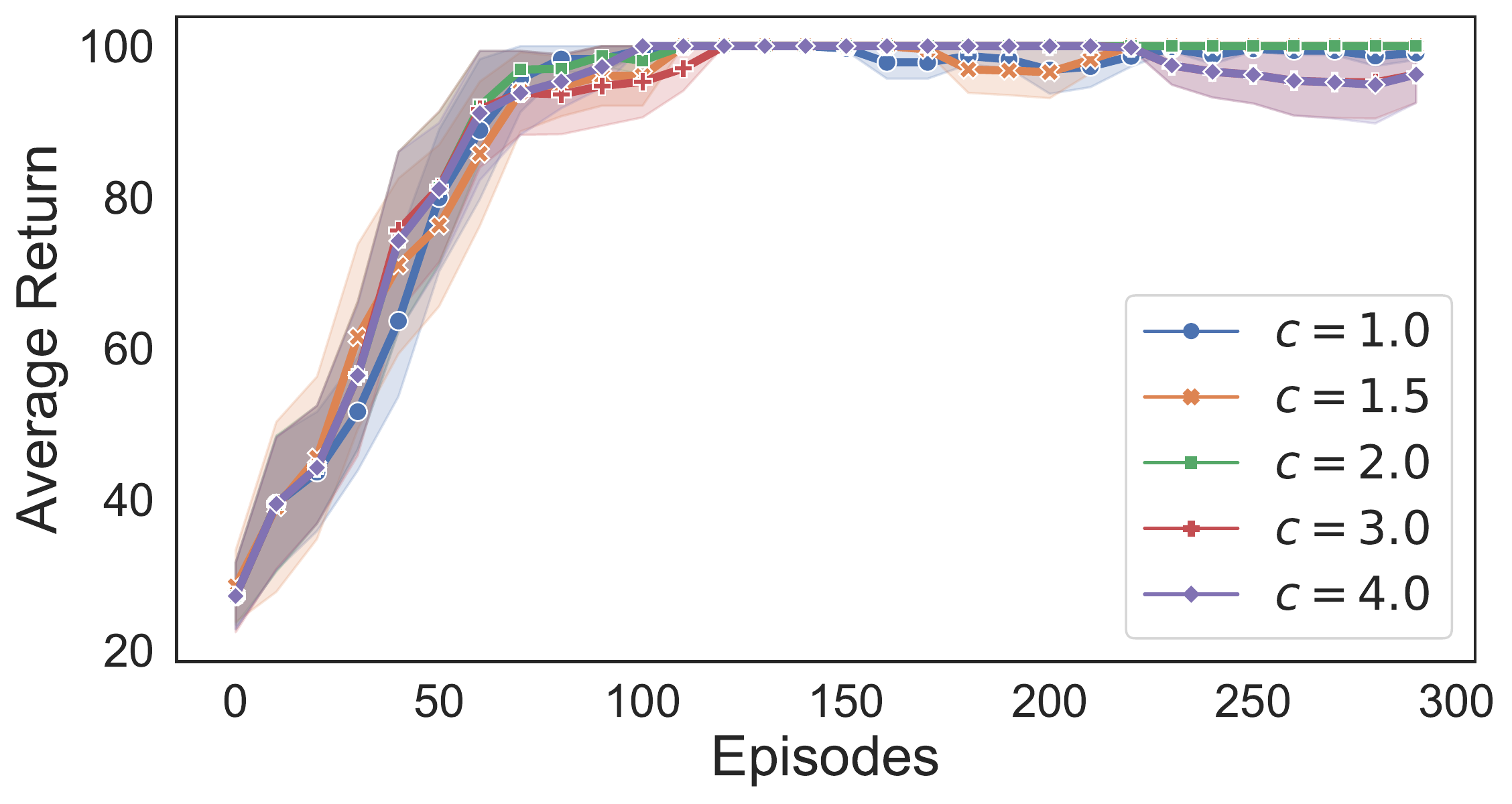}
         \label{fig: actor-critic sensitivity (CartPole)}}
     \subfloat[PPO]{
         \centering
         \includegraphics[width=0.4\textwidth]{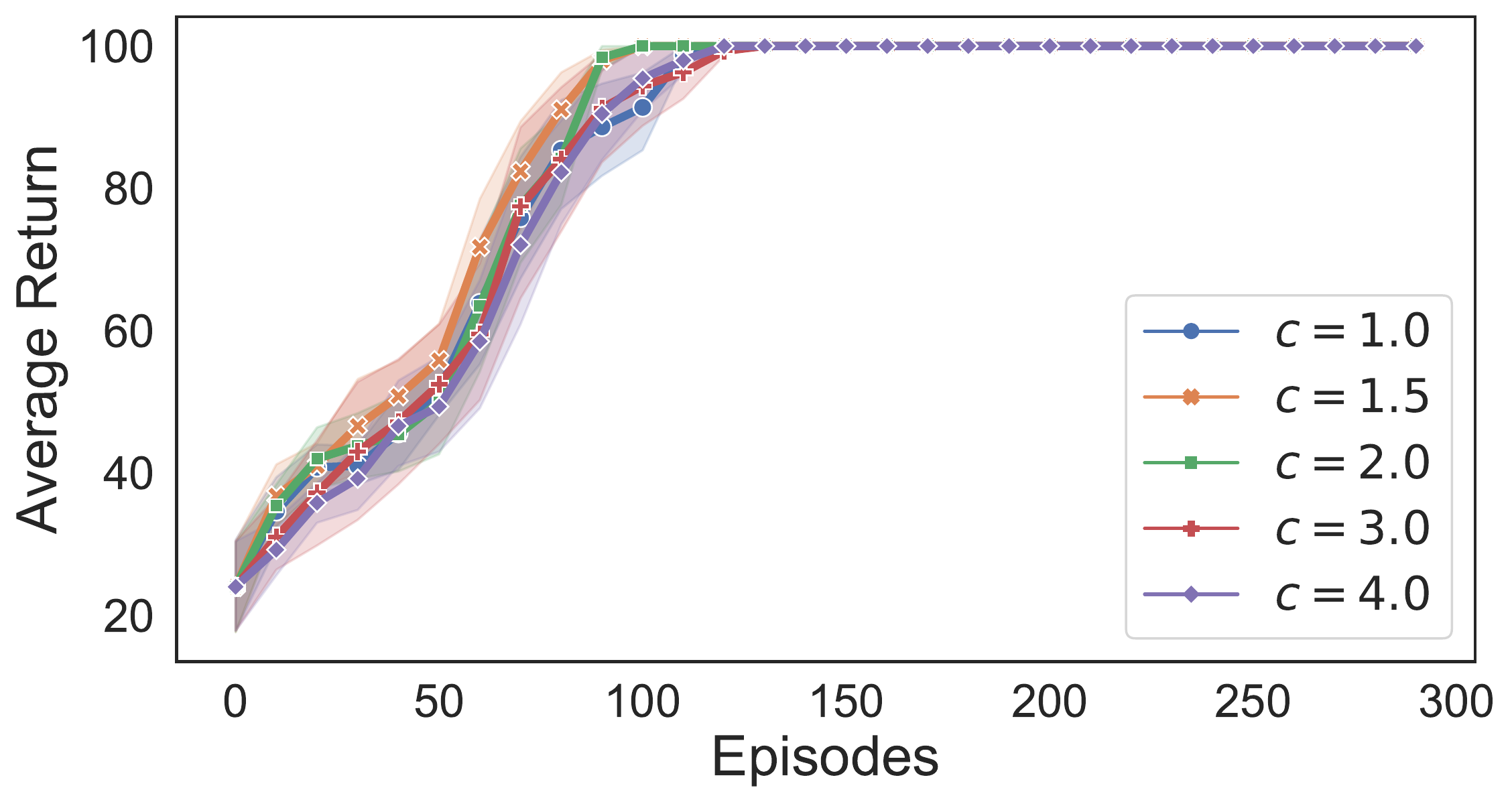}
         \label{fig: actor-critic sensitivity (Fermentation)}
     }
     \medskip
     \caption{Sensitivity analysis of the reuse selection threshold constant $c$ in Cartpole example. 
     } 
     \label{fig: sensitivity}
     \vspace{-0.2in}
\end{figure}

\subsection{Variance Reduction}
\label{subsec: variance reduction}
In this section, we present empirical results to assess the performance of the proposed VRER in terms of reducing the policy gradient estimation variance. 
We first test the proposed VRER method in conjunction with actor-critic algorithm (Actor-Critic-VRER) in three distinct control examples (Figure~\ref{fig: actor-critic gradient norm}). The original actor-critic method is an on-policy reinforcement learning algorithm that thus suffers from the high variability of gradient estimators. By selectively reusing historical transition observations through the VRER-based  
MLR approach, the Actor-Critic algorithm shows a significant reduction in the estimation variance of policy gradient in all three examples, compared to the original policy gradient without any experience replay. 

Similar results are also observed for the PPO algorithm. The PPO, instead of using multiple importance sampling, clips/truncates likelihood ratio for policy regularization and therefore eliminates the inflated gradient variance caused by extreme samples \citep{schulman2017proximal}. The chipping technique, as an alternative to MLR method, provides a simple and computation efficient method to regularize the policy gradient and adjust distributional difference. However it introduces extra bias and thus may cause the algorithm stuck at suboptimum. The results in Figure~\ref{fig: ppo gradient norm} show that the use of VRER can still reduce the estimation variance of PPO policy gradient estimator even if MLR is replaced by chipped likelihood ratio.

\begin{figure}[!thb]
     \centering
     \subfloat[CartPole]{%
         \centering
         \includegraphics[width=0.315\textwidth]{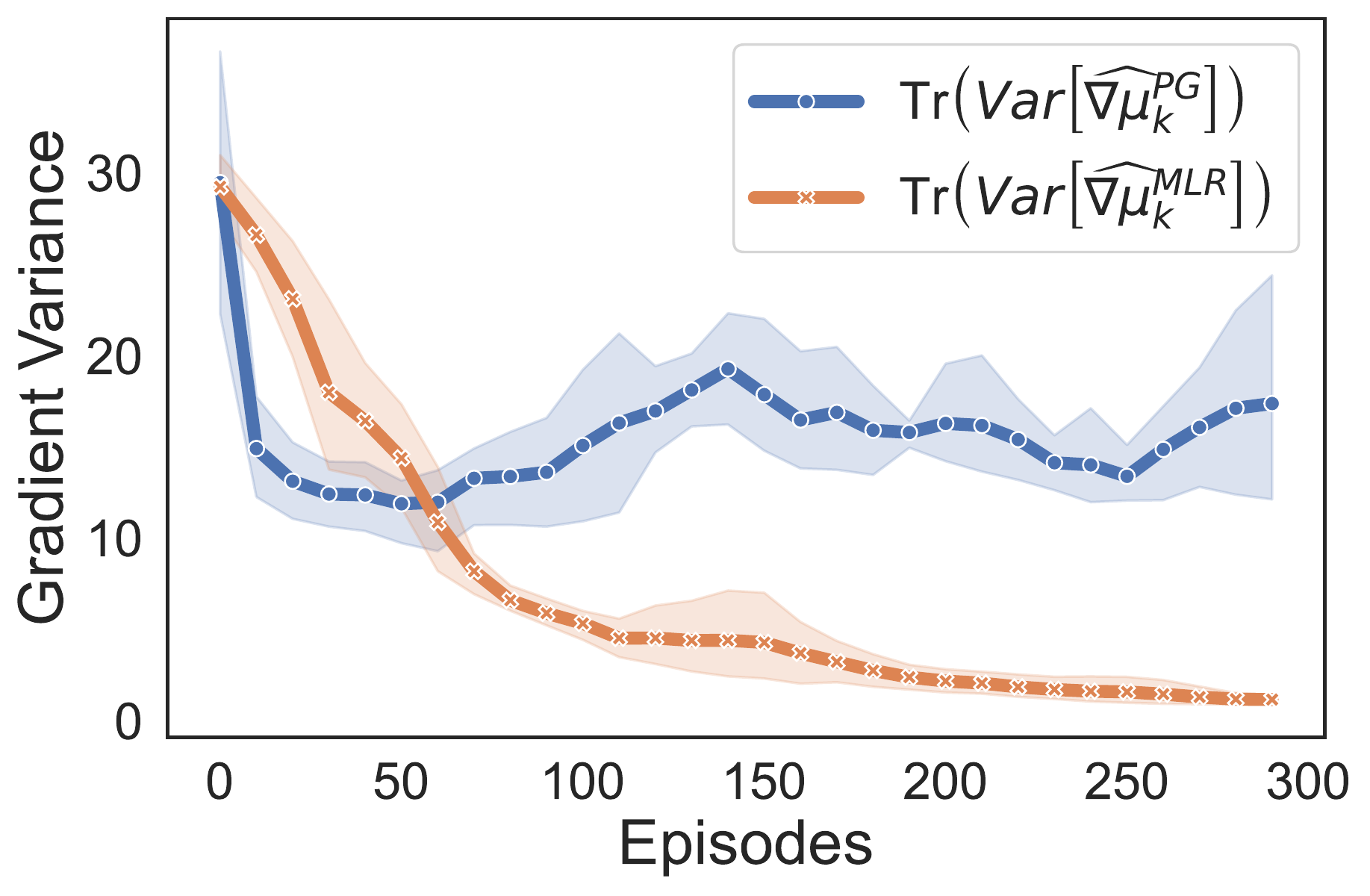}
         \label{fig: actor-critic gradient norm (CartPole)}}\hspace{-0.08in}%
     \subfloat[Acrobot]{
         \centering
         \includegraphics[width=0.33\textwidth]{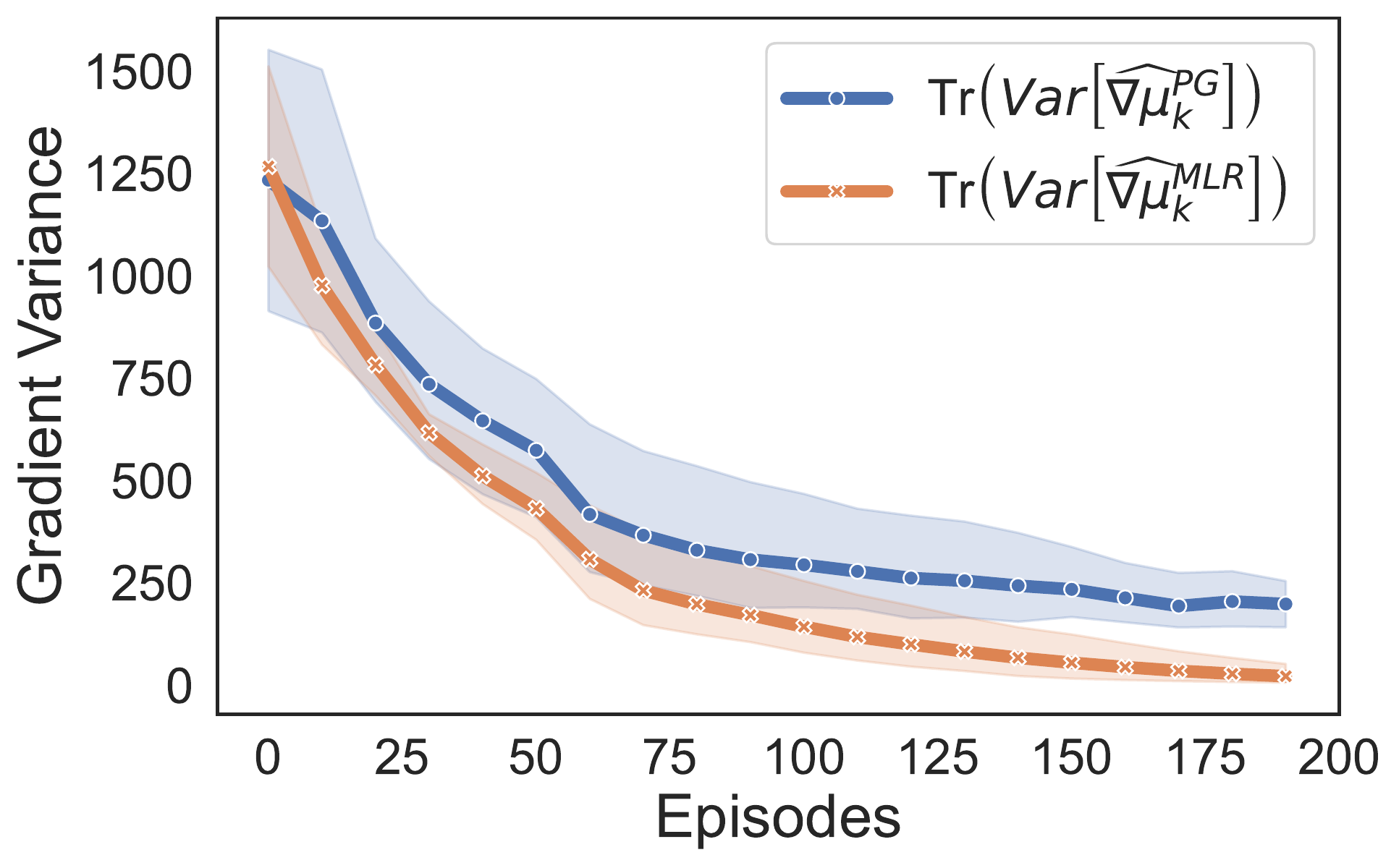}
         \label{fig: actor-critic gradient norm (Acrobot)}
     }\hspace{-0.12in}%
     \subfloat[Fermentation]{
         \centering
         \includegraphics[width=0.33\textwidth]{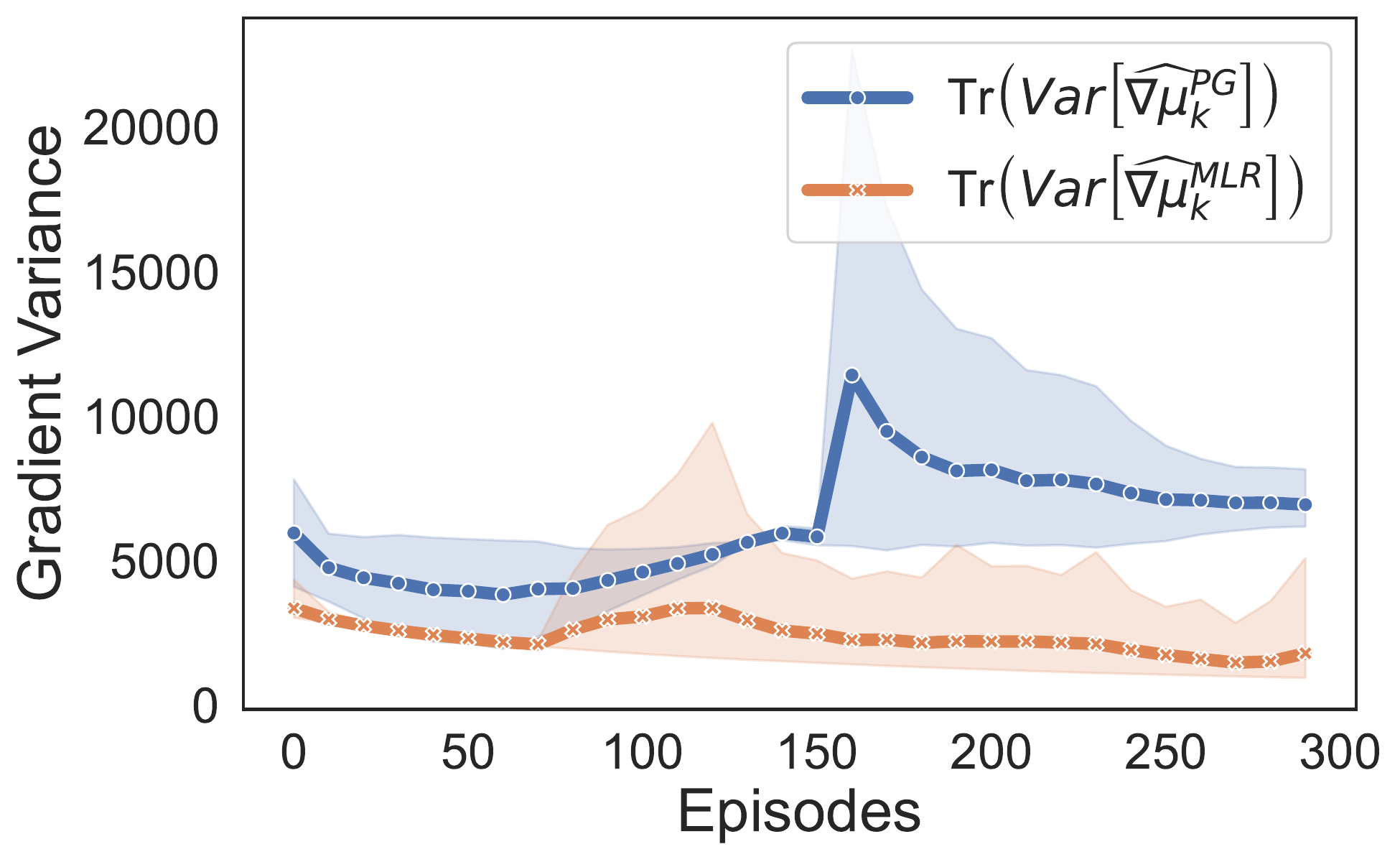}
         \label{fig: actor-critic gradient norm (Fermentation)}
     }
     \medskip
     \vspace{-0.1in}
     \caption{Policy gradient estimation variance results of  Actor-Critic algorithm with and without VRER. 
     } 
     \label{fig: actor-critic gradient norm}
     \vspace{-0.2in}
\end{figure}

\begin{figure}[!thb]
     \centering
     \subfloat[CartPole]{%
         \centering
         \includegraphics[width=0.315\textwidth]{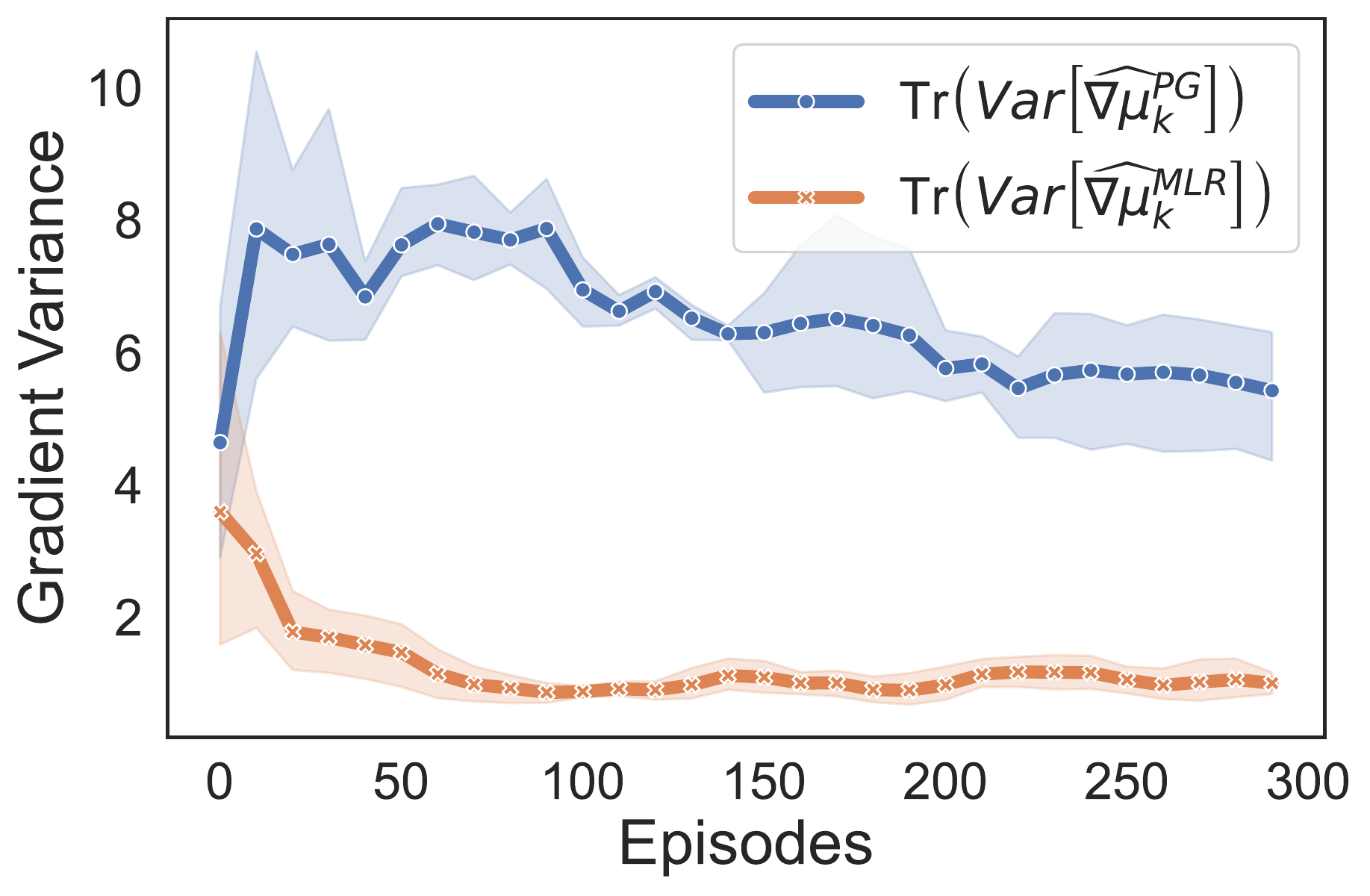}
         \label{fig: PPO gradient norm (CartPole)}}\hspace{-0.02in}%
     \subfloat[Acrobot]{
         \centering
         \includegraphics[width=0.325\textwidth]{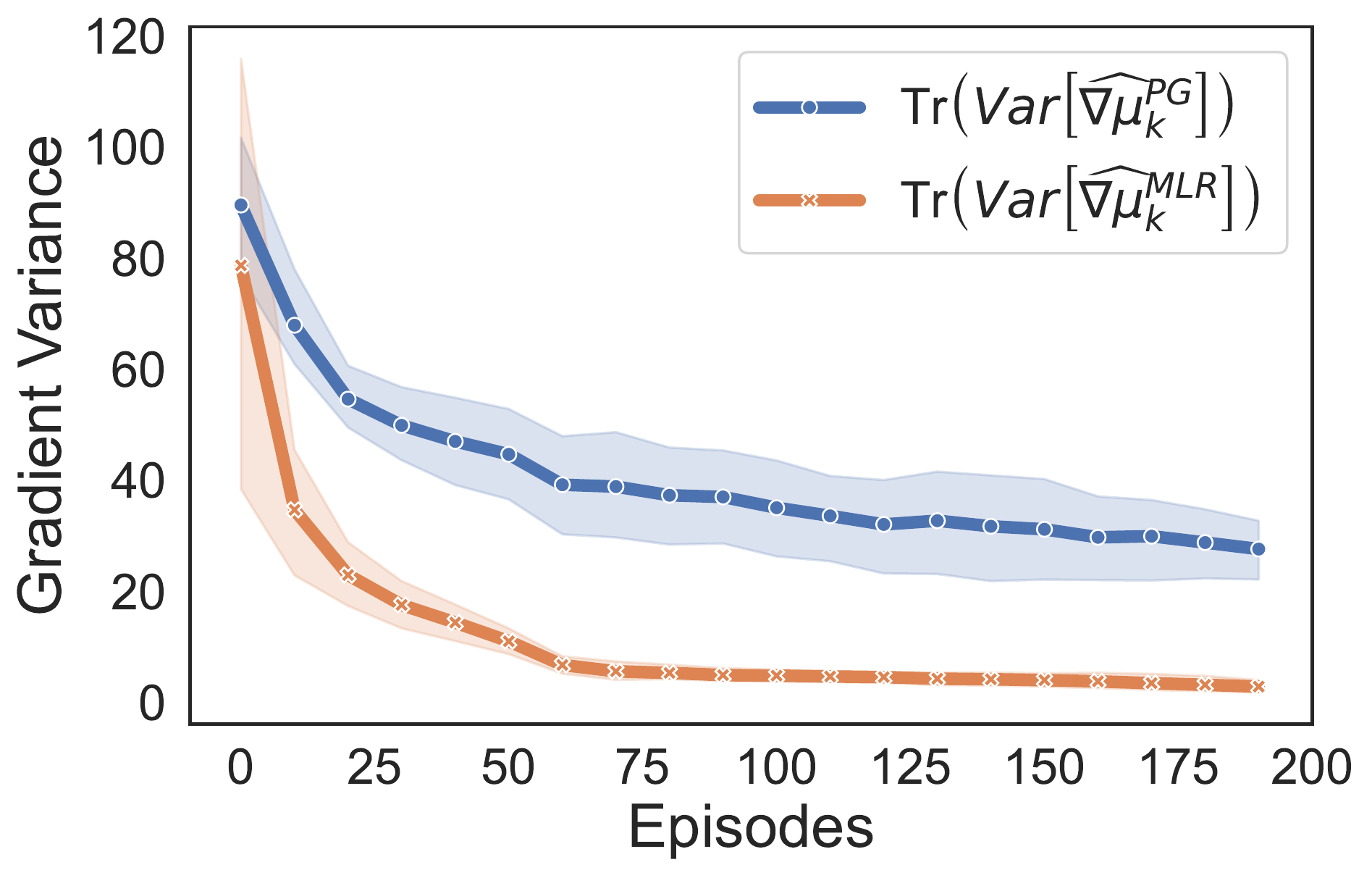}
         \label{fig: PPO gradient norm (Acrobot)}
     }
     \subfloat[Fermentation]{
         \centering
         \includegraphics[width=0.31\textwidth]{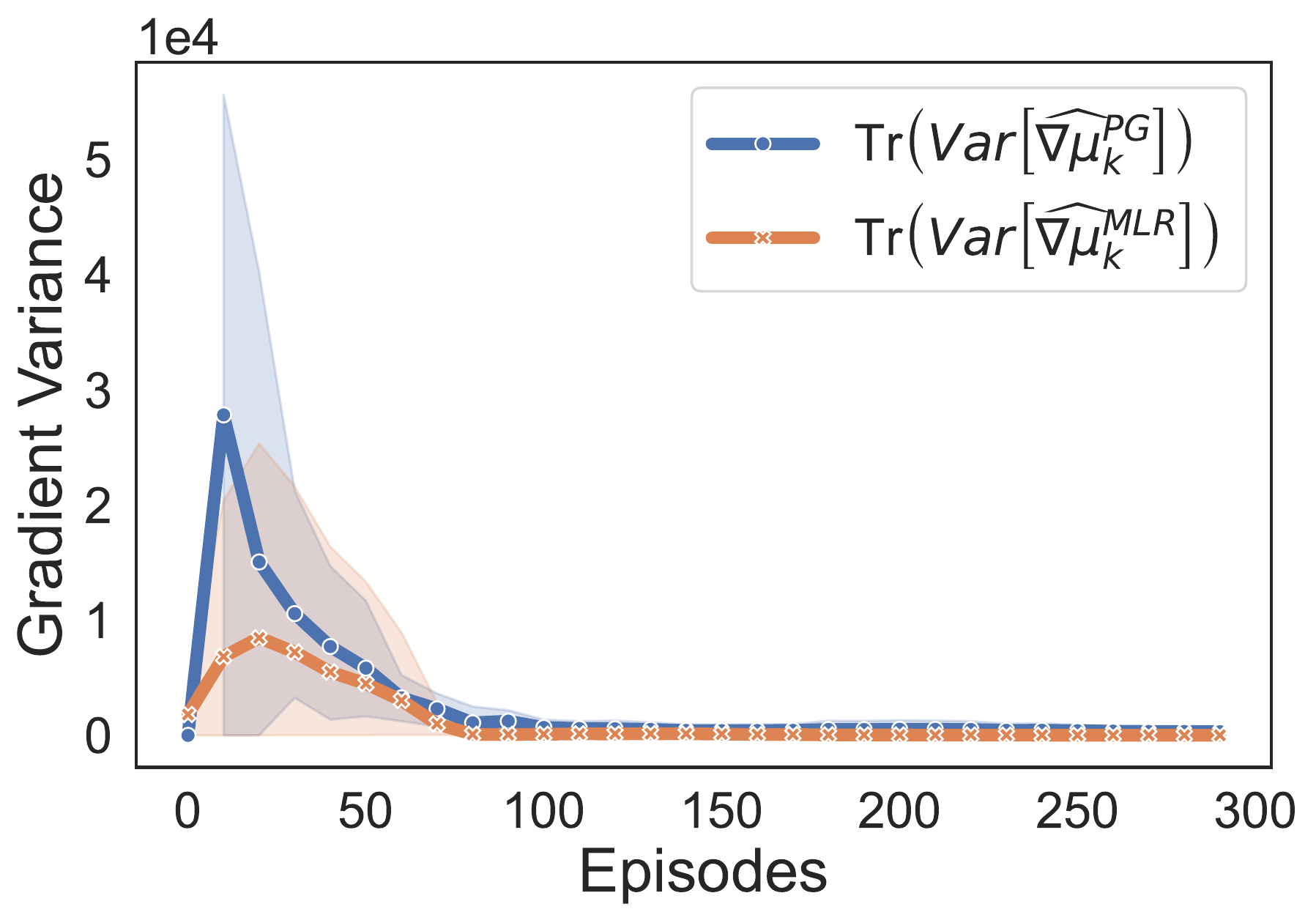}
         \label{fig: PPO gradient norm (Fermentation)}
     }
     \medskip
     \vspace{-0.1in}
     \caption{Policy gradient estimation variance results of  PPO algorithm with and without VRER. 
     } 
     \label{fig: ppo gradient norm}
     \vspace{-0.2in}
\end{figure}

\section{Conclusion}
To guide real-time process control in low-data situations, we create a variance reduction experience replay approach 
to accelerate policy gradient optimization.
The proposed selection rule guarantees the variance reduction in the policy gradient estimation through selectively reusing the most relevant historical transition observations and automatically allocating more weights to those observations or partial trajectories 
that are more likely generated by the target stochastic decision process model. The empirical studies show that the incorporation of proposed VRER and MLR with the state-of-the-art policy optimization approaches can substantially improve their optimization convergence especially under the situations with a tight budget.

\bibliographystyle{unsrtnat}
\bibliography{references}  

\begin{thebibliography}{24}
\providecommand{\natexlab}[1]{#1}
\providecommand{\url}[1]{\texttt{#1}}
\expandafter\ifx\csname urlstyle\endcsname\relax
  \providecommand{\doi}[1]{doi: #1}\else
  \providecommand{\doi}{doi: \begingroup \urlstyle{rm}\Url}\fi

\bibitem[Metelli et~al.(2020)Metelli, Papini, Montali, and
  Restelli]{metelli2020importance}
Alberto~Maria Metelli, Matteo Papini, Nico Montali, and Marcello Restelli.
\newblock Importance sampling techniques for policy optimization.
\newblock \emph{Journal of Machine Learning Research}, 21\penalty0
  (141):\penalty0 1--75, 2020.
\newblock URL \url{http://jmlr.org/papers/v21/20-124.html}.

\bibitem[Andrad{\'o}ttir et~al.(1995)Andrad{\'o}ttir, Heyman, and
  Ott]{andradottir1995choice}
Sigr{\'u}n Andrad{\'o}ttir, Daniel~P Heyman, and Teunis~J Ott.
\newblock On the choice of alternative measures in importance sampling with
  markov chains.
\newblock \emph{Operations research}, 43\penalty0 (3):\penalty0 509--519, 1995.

\bibitem[Schlegel et~al.(2019)Schlegel, Chung, Graves, Qian, and
  White]{schlegel2019importance}
Matthew Schlegel, Wesley Chung, Daniel Graves, Jian Qian, and Martha White.
\newblock Importance resampling for off-policy prediction.
\newblock \emph{Advances in Neural Information Processing Systems}, 32, 2019.

\bibitem[Nachum et~al.(2019)Nachum, Chow, Dai, and Li]{NEURIPS2019_cf9a242b}
Ofir Nachum, Yinlam Chow, Bo~Dai, and Lihong Li.
\newblock Dualdice: Behavior-agnostic estimation of discounted stationary
  distribution corrections.
\newblock In H.~Wallach, H.~Larochelle, A.~Beygelzimer, F.~d\textquotesingle
  Alch\'{e}-Buc, E.~Fox, and R.~Garnett, editors, \emph{Advances in Neural
  Information Processing Systems}, volume~32. Curran Associates, Inc., 2019.
\newblock URL
  \url{https://proceedings.neurips.cc/paper/2019/file/cf9a242b70f45317ffd281241fa66502-Paper.pdf}.

\bibitem[Yang et~al.(2020)Yang, Nachum, Dai, Li, and Schuurmans]{yang2020off}
Mengjiao Yang, Ofir Nachum, Bo~Dai, Lihong Li, and Dale Schuurmans.
\newblock Off-policy evaluation via the regularized lagrangian.
\newblock \emph{Advances in Neural Information Processing Systems},
  33:\penalty0 6551--6561, 2020.

\bibitem[Schulman et~al.(2017)Schulman, Wolski, Dhariwal, Radford, and
  Klimov]{schulman2017proximal}
John Schulman, Filip Wolski, Prafulla Dhariwal, Alec Radford, and Oleg Klimov.
\newblock Proximal policy optimization algorithms.
\newblock \emph{arXiv preprint arXiv:1707.06347}, 2017.

\bibitem[Bhatnagar et~al.(2009)Bhatnagar, Sutton, Ghavamzadeh, and
  Lee]{bhatnagar2009natural}
Shalabh Bhatnagar, Richard~S Sutton, Mohammad Ghavamzadeh, and Mark Lee.
\newblock Natural actor--critic algorithms.
\newblock \emph{Automatica}, 45\penalty0 (11):\penalty0 2471--2482, 2009.

\bibitem[Zheng et~al.(2021{\natexlab{a}})Zheng, Xie, and Feng]{zheng2021green}
Hua Zheng, Wei Xie, and M~Ben Feng.
\newblock Green simulation assisted policy gradient to accelerate stochastic
  process control.
\newblock \emph{arXiv preprint arXiv:2110.08902}, 2021{\natexlab{a}}.

\bibitem[Sutton et~al.(1999)Sutton, McAllester, Singh, and
  Mansour]{sutton1999policy}
Richard~S Sutton, David McAllester, Satinder Singh, and Yishay Mansour.
\newblock Policy gradient methods for reinforcement learning with function
  approximation.
\newblock In S.~Solla, T.~Leen, and K.~M\"{u}ller, editors, \emph{Advances in
  Neural Information Processing Systems}, volume~12. MIT Press, 1999.
\newblock URL
  \url{https://proceedings.neurips.cc/paper/1999/file/464d828b85b0bed98e80ade0a5c43b0f-Paper.pdf}.

\bibitem[Silver et~al.(2014)Silver, Lever, Heess, Degris, Wierstra, and
  Riedmiller]{silver2014deterministic}
David Silver, Guy Lever, Nicolas Heess, Thomas Degris, Daan Wierstra, and
  Martin Riedmiller.
\newblock Deterministic policy gradient algorithms.
\newblock In \emph{International conference on machine learning}, pages
  387--395. PMLR, 2014.

\bibitem[Rubinstein and Kroese(2016)]{rubinstein2016simulation}
Reuven~Y Rubinstein and Dirk~P Kroese.
\newblock \emph{Simulation and the Monte Carlo method}, volume~10.
\newblock John Wiley \& Sons, 2016.

\bibitem[Feng and Staum(2017)]{FengGreenSim2017}
Mingbin Feng and Jeremy Staum.
\newblock Green simulation: Reusing the output of repeated experiments.
\newblock \emph{ACM Transactions on Modeling and Computer Simulation (TOMACS)},
  27\penalty0 (4):\penalty0 23:1--23:28, October 2017.
\newblock ISSN 1049-3301.
\newblock \doi{10.1145/3129130}.
\newblock URL \url{http://doi.acm.org/10.1145/3129130}.

\bibitem[Veach and Guibas(1995)]{veach1995optimally}
Eric Veach and Leonidas~J Guibas.
\newblock Optimally combining sampling techniques for monte carlo rendering.
\newblock In \emph{Proceedings of the 22nd annual conference on Computer
  graphics and interactive techniques}, pages 419--428, 1995.

\bibitem[Degris et~al.(2012)Degris, White, and Sutton]{degris2012off}
Thomas Degris, Martha White, and Richard~S. Sutton.
\newblock Off-policy actor-critic.
\newblock In \emph{Proceedings of the 29th International Coference on
  International Conference on Machine Learning}, ICML'12, page 179–186,
  Madison, WI, USA, 2012. Omnipress.
\newblock ISBN 9781450312851.

\bibitem[Schulman et~al.(2015)Schulman, Levine, Abbeel, Jordan, and
  Moritz]{schulman2015trust}
John Schulman, Sergey Levine, Pieter Abbeel, Michael Jordan, and Philipp
  Moritz.
\newblock Trust region policy optimization.
\newblock In \emph{International conference on machine learning}, pages
  1889--1897. PMLR, 2015.

\bibitem[Wang et~al.(2017)Wang, Bapst, Heess, Mnih, Munos, Kavukcuoglu, and
  de~Freitas]{wang2017sample}
Ziyu Wang, Victor Bapst, Nicolas Heess, Volodymyr Mnih, R{\'{e}}mi Munos, Koray
  Kavukcuoglu, and Nando de~Freitas.
\newblock Sample efficient actor-critic with experience replay.
\newblock In \emph{5th International Conference on Learning Representations,
  {ICLR} 2017, Toulon, France, April 24-26, 2017, Conference Track
  Proceedings}. OpenReview.net, 2017.
\newblock URL \url{https://openreview.net/forum?id=HyM25Mqel}.

\bibitem[Mnih et~al.(2015)Mnih, Kavukcuoglu, Silver, Rusu, Veness, Bellemare,
  Graves, Riedmiller, Fidjeland, Ostrovski, et~al.]{mnih2015human}
Volodymyr Mnih, Koray Kavukcuoglu, David Silver, Andrei~A Rusu, Joel Veness,
  Marc~G Bellemare, Alex Graves, Martin Riedmiller, Andreas~K Fidjeland, Georg
  Ostrovski, et~al.
\newblock Human-level control through deep reinforcement learning.
\newblock \emph{nature}, 518\penalty0 (7540):\penalty0 529--533, 2015.

\bibitem[Martino et~al.(2015)Martino, Elvira, Luengo, and
  Corander]{martino2015adaptive}
Luca Martino, Victor Elvira, David Luengo, and Jukka Corander.
\newblock An adaptive population importance sampler: Learning from uncertainty.
\newblock \emph{IEEE Transactions on Signal Processing}, 63\penalty0
  (16):\penalty0 4422--4437, 2015.

\bibitem[Li et~al.(2014)Li, Zhang, Chen, and Smola]{li2014efficient}
Mu~Li, Tong Zhang, Yuqiang Chen, and Alexander~J Smola.
\newblock Efficient mini-batch training for stochastic optimization.
\newblock In \emph{Proceedings of the 20th ACM SIGKDD international conference
  on Knowledge discovery and data mining}, pages 661--670, 2014.

\bibitem[Goodfellow et~al.(2016)Goodfellow, Bengio, and
  Courville]{Goodfellow-et-al-2016}
Ian Goodfellow, Yoshua Bengio, and Aaron Courville.
\newblock \emph{Deep Learning}.
\newblock MIT Press, 2016.
\newblock \url{http://www.deeplearningbook.org}.

\bibitem[Brockman et~al.(2016)Brockman, Cheung, Pettersson, Schneider,
  Schulman, Tang, and Zaremba]{1606.01540}
Greg Brockman, Vicki Cheung, Ludwig Pettersson, Jonas Schneider, John Schulman,
  Jie Tang, and Wojciech Zaremba.
\newblock Openai gym.
\newblock 2016.

\bibitem[Zheng et~al.(2021{\natexlab{b}})Zheng, Xie, Ryzhov, and
  Xie]{zheng2021policy}
Hua Zheng, Wei Xie, Ilya~O Ryzhov, and Dongming Xie.
\newblock Policy optimization in bayesian network hybrid models of
  biomanufacturing processes.
\newblock \emph{arXiv preprint arXiv:2105.06543}, 2021{\natexlab{b}}.

\bibitem[Chollet et~al.(2015)]{chollet2015keras}
Fran\c{c}ois Chollet et~al.
\newblock Keras.
\newblock \url{https://keras.io}, 2015.

\bibitem[Sutton and Barto(2018)]{sutton2018reinforcement}
Richard~S Sutton and Andrew~G Barto.
\newblock \emph{Reinforcement learning: An introduction}.
\newblock MIT press, 2018.

\end{thebibliography}






\end{document}